\renewcommand\footnotetextcopyrightpermission[1]{} 
\theoremstyle{definition}
\DeclareMathOperator*{\argmin}{arg\,min}
\newcommand{\bp}{\begin{proof} \small }
\newcommand{\ep}{\end{proof} \normalsize}
\newcommand{\epx}{\end{proof} \small}
\newcommand{\bpa}{\begin{proofappx} \footnotesize }
\newcommand{\epa}{\end{proofappx} \small }
\newtheorem{theorem}{Theorem}
\newtheorem{lemma}{Lemma}
\newtheorem*{theorem*}{Theorem}
\newtheorem*{proposition*}{Proposition}
\newtheorem*{corollary*}{Corollary}
\newtheorem*{lemma*}{Lemma}
\newtheorem*{assumption*}{Assumption}
\newtheorem*{definition*}{Definition}
\newtheorem*{claim*}{Claim}
\newcommand{\bm}[1]{\mbox{\boldmath $#1$}}
\newcommand{\be}{\begin{equation}}
\newcommand{\ee}{\end{equation}}
\newcommand{\bs}{\begin{subequations}}
\newcommand{\es}{\end{subequations}}
\newcommand{\bq}{\begin{eqnarray}}
\newcommand{\eq}{\end{eqnarray}}
\newcommand{\bqn}{\begin{eqnarray*}}
\newcommand{\eqn}{\end{eqnarray*}}
\newcommand{\ba}{\left[ \begin{array}}
\newcommand{\ea}{\\ \end{array} \right]}
\newcommand{\ben}{\begin{enumerate}}
\newcommand{\een}{\end{enumerate}}
\def\A{{\boldsymbol{A}}}
\def\b{{\boldsymbol{b}}}
\def\p{{\boldsymbol{p}}}
\def\x{{\boldsymbol{x}}}
\def\real{{\mathchoice%
{\hbox{\rm\setbox1=\hbox{I}\copy1\kern-.45\wd1 R}}
{\hbox{\rm\setbox1=\hbox{I}\copy1\kern-.45\wd1 R}}
{\hbox{\scriptsize\rm\setbox1=\hbox{I}\copy1\kern-.45\wd1 R}}
{\hbox{\scriptsize\rm\setbox1=\hbox{I}\copy1\kern-.45\wd1 R}}}}
\def\Zint{{\mathchoice{\setbox1=\hbox{\sf Z}\copy1\kern-.75\wd1\box1}
{\setbox1=\hbox{\sf Z}\copy1\kern-.75\wd1\box1}
{\setbox1=\hbox{\scriptsize\sf Z}\copy1\kern-.75\wd1\box1}
{\setbox1=\hbox{\scriptsize\sf Z}\copy1\kern-.75\wd1\box1}}}
\newcommand{\complex}{ \hbox{\rm C\kern-0.45em\rule[.07em]{.02em}{.58em}%
\kern 0.43em}}
\newcommand{\algmargin}{\the\ALG@thistlm}
\newlength{\whilewidth}
\algnewcommand{\parState}[1]{\State%
	\parbox[t]{\dimexpr\linewidth-\algmargin}{\strut #1\strut}}
\begin{document}
\title{Autodidactic Neurosurgeon: Collaborative Deep Inference for Mobile Edge Intelligence via Online Learning}

\author{Letian Zhang}
\email{lxz437@miami.edu}
\affiliation{%
	\institution{University of Miami}
	\city{Coral Gables}
	\state{Florida}
	\country{USA}
	\postcode{33146}
}

\author{Lixing Chen}
\email{lx.chen@miami.edu}
\affiliation{%
	\institution{University of Miami}
	\city{Coral Gables}
	\state{Florida}
	\country{USA}
	\postcode{33146}
}

\author{Jie Xu}
\email{jiexu@miami.edu}
\affiliation{%
	\institution{University of Miami}
	\city{Coral Gables}
	\state{Florida}
	\country{USA}
	\postcode{33146}
}

\begin{abstract}
    Recent breakthroughs in deep learning (DL) have led to the emergence of many intelligent mobile applications and services, but in the meanwhile also pose unprecedented computing challenges on resource-constrained mobile devices. This paper builds a collaborative deep inference system between a resource-constrained mobile device and a powerful edge server, aiming at joining the power of both on-device processing and computation offloading. The basic idea of this system is to partition a deep neural network (DNN) into a front-end part running on the mobile device and a back-end part running on the edge server, with the key challenge being how to locate the optimal partition point to minimize the end-to-end inference delay. Unlike existing efforts on DNN partitioning that rely heavily on a dedicated offline profiling stage to search for the optimal partition point, our system has a built-in online learning module, called Autodidactic Neurosurgeon (ANS), to automatically learn the optimal partition point on-the-fly. Therefore, ANS is able to closely follow the changes of the system environment by generating new knowledge for adaptive decision making. The core of ANS is a novel contextual bandit learning algorithm, called $\mu$LinUCB, which not only has provable theoretical learning performance guarantee but also is ultra-lightweight for easy real-world implementation. We implement our system on a video stream object detection testbed to validate the design of ANS and evaluate its performance. The experiments show that ANS significantly outperforms state-of-the-art benchmarks in terms of tracking system changes and reducing the end-to-end inference delay. 

\end{abstract}



\keywords{Deep learning inference, edge computing, online learning, mobile object detection system}
\maketitle

\section{Introduction}
Deep neural networks (DNNs) have been the state-of-the-art solution in recent years for many functionalities routinely integrated in mobile devices, e.g., face recognition and speech assistant. However, efficiently integrating current and future deep learning (DL) breakthrough within resource constrained mobile devices is challenging. Although steps have been taken recently to enable DL functionalities on mobile devices, e.g., model compression \cite{xie2019source}, lightweight machine learning libraries \cite{tfLite} and new-generation hardware \cite{sima2018apple}, they are unlikely to be a \emph{one-size-fits-all} solution that can address the immediate needs of all existing mobile devices due to the substantial heterogeneity in terms of their computing capacity. A recent study by Facebook \cite{wu2019machine} shows that over 50\% mobile devices are using processors at least six years old, limiting what is possible of Facebook AI service. Also, for wearable mobile devices, e.g., smart wristbands, their computing resource limitation is not due to the temporary technical deficiency but design requirements to guarantee portability \cite{mao2016dynamic}. Therefore, external booster becomes necessary to realize the full potential of DNN on mobile devices.

Current wisdom focuses on the Multi-Access Edge Computing (MEC) \cite{mao2017survey}, a new paradigm and key technology of 5G that moves cloud-like functionality towards edge servers close to data sources. The idea is to configure DNNs on edge servers to which the input data will be sent from mobile devices on the occurrence of inference requests. While recognizing the advantages of edge computing for DNN inference, previous empirical studies \cite{li2018edge} reveal that its performance is highly sensitive to the bandwidth between edge servers and mobile devices. For massive input like video streaming, the delay of DNN inference by offloading the entire input data to the edge server can become on par with or even worse than that on the local mobile device. With the observation that the data size of some intermediate results (i.e., the output of intermediate layers in DNN) is significantly smaller than that of the raw input data, collaborative deep inference between the mobile device and the edge server starts to attract increasing attention recently as it is able to leverage the power of both on-device processing and computation offloading. The idea is to partition the DNN into a front-end part running on the mobile device and a back-end part running on the edge server. The mobile device executes the DNN model up to an intermediate layer, and then sends the smaller intermediate results to the edge server for computing the rest part of the DNN. Compared to either pure on-device processing or computation offloading to an edge server, collaborative deep inference is expected to be more reliable and flexible in balancing the transmission and computation workload between the mobile device and the edge server, and hence has the potential of optimizing the end-to-end inference performance.

\vspace{-10 pt}

\subsection{Numerical Insights}
To demonstrate the effectiveness of collaborative deep inference, Fig. \ref{fig:totalPartitionLatency} shows the end-to-end inference delay, when Vgg16 is partitioned at different layers under an uplink transmission speed 12 Mbps. As can be seen, partitioning Vgg16 at the fc1 layer reduces the end-to-end inference delay by $29.64\%$ compared to on-device processing or edge offloading.

Apparently, partitioning a DNN does not always outperform on-device processing and edge offloading, and the optimal partition point depends on many factors, among which, the  computing capability of the edge server and the network condition are major. To illustrate the impact of computing capability of the edge server on the optimal DNN partition point, Fig. \ref{fig:edgeComputationDiff} shows the end-to-end inference delay if the DNN is partitioned at different layers for a high-capability edge server (i.e., GPU and low workload) and a low-capability edge server (i.e., CPU and high workload). As can be seen, the optimal partition point tends to be later (or even pure on-device processing in this case) as the benefit of offloading to a lower-capability edge server is smaller. The other major factor is the network condition, which affects the transmission delay. Fig. \ref{fig:networkSpeedDiff} shows the optimal partition points under three network conditions: High uplink rate (50 Mbps), Medium uplink rate (16 Mbps), and Low uplink rate (4 Mbps). As can be seen, a lower uplink rate tends to push the partition point later. However, because the output data size is not necessarily monotonically increasing/decreasing as we move to later layers, the optimal partition point is complexly dependent on the uplink transmission rate. To summarize, the computing capability of the edge server and the network condition critically affect the collaborative deep inference performance.

\begin{figure*}[tt]
    \centering
    \begin{minipage}[t]{0.31\textwidth}
        \centering
        \includegraphics[width=\textwidth]{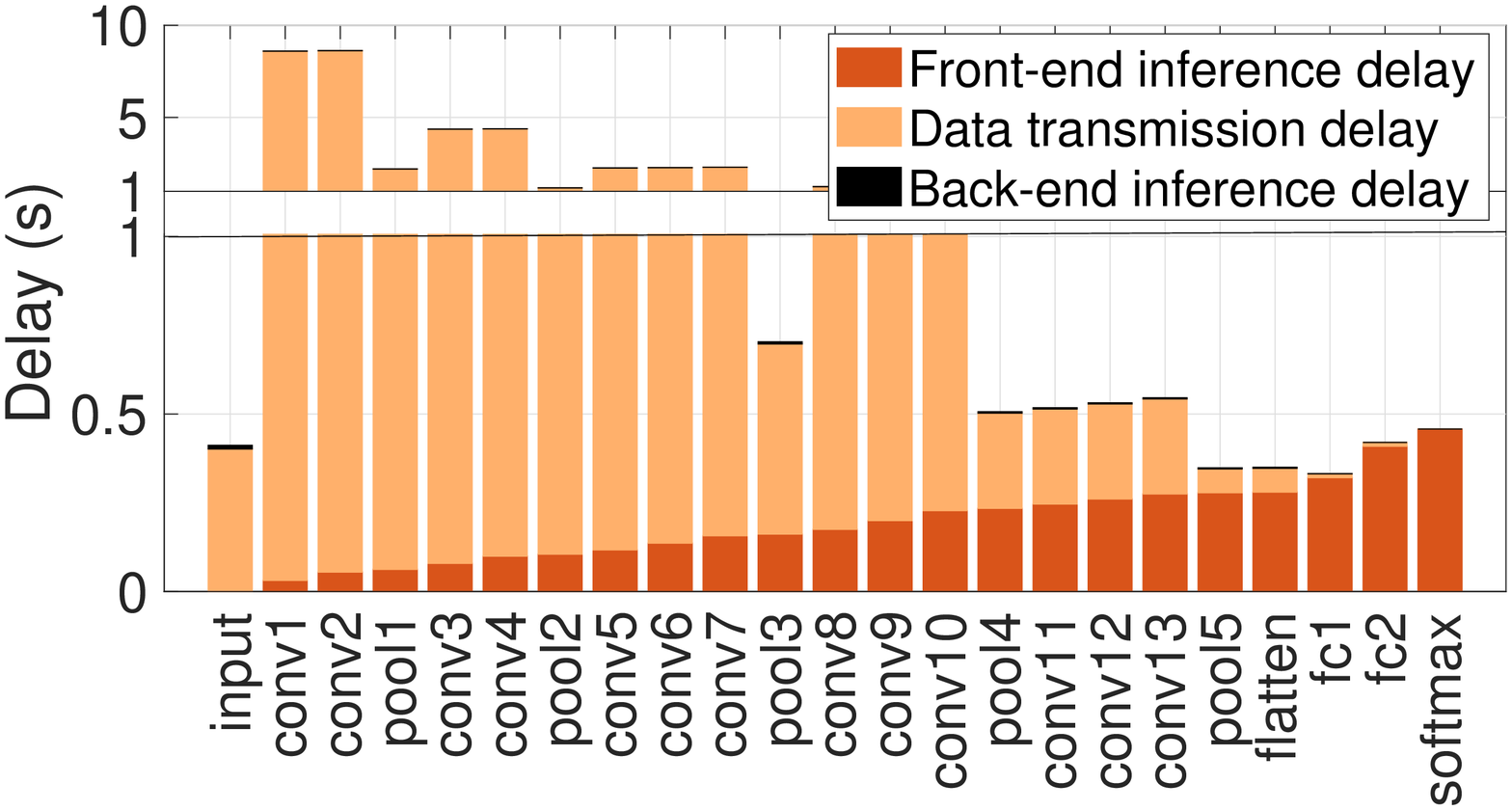}
        \caption{End-to-end delay: front-end inference delay + transmission delay + back-end inference delay. (Vgg16)}
        \label{fig:totalPartitionLatency}
    \end{minipage}
    \hspace{0.02\textwidth}
    \begin{minipage}[t]{0.31\textwidth}
        \centering
        \includegraphics[width=\textwidth]{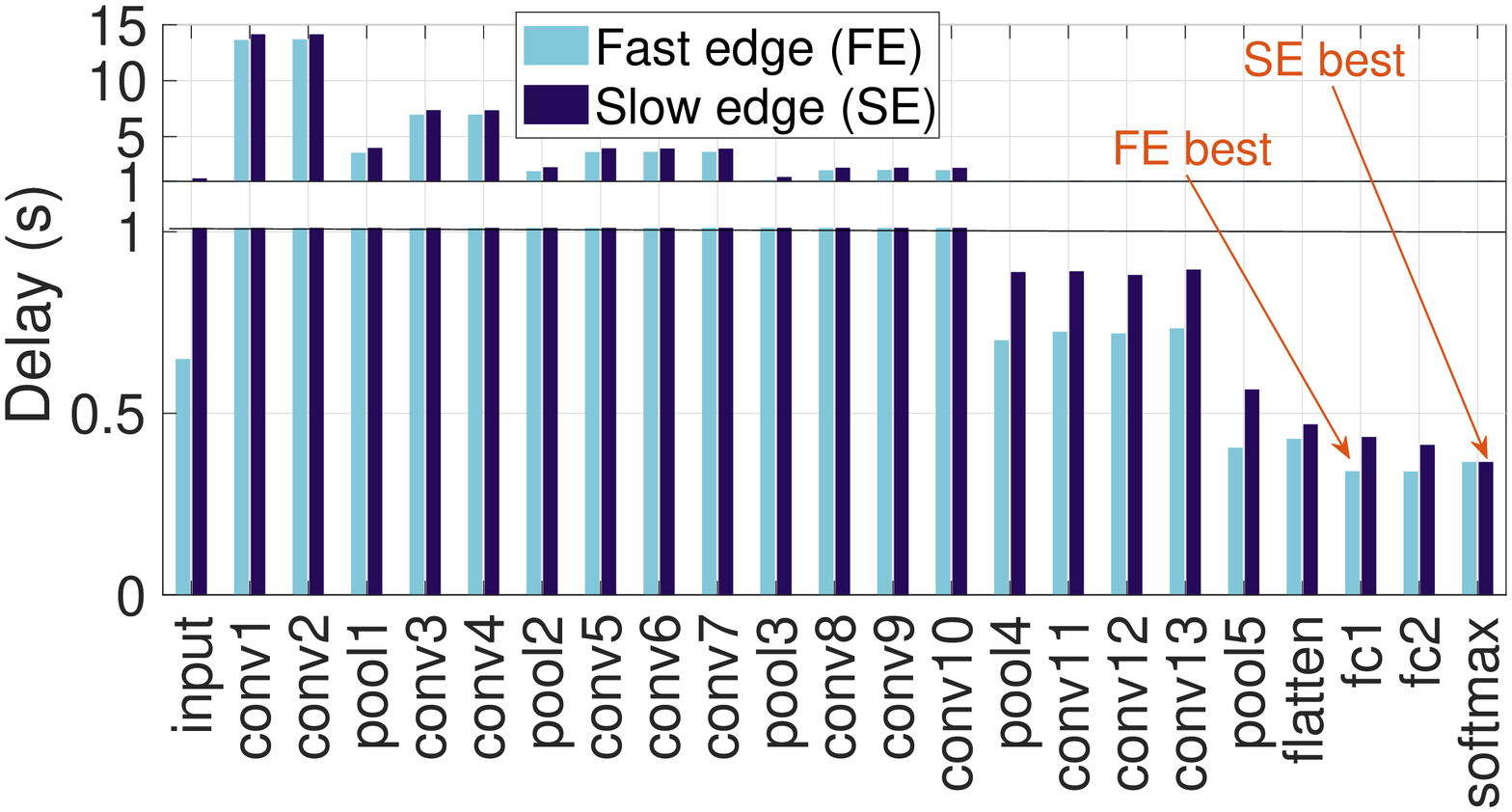}
        \caption{End-to-end delay at different partition points under different edge capabilities. (Vgg16)}
        \label{fig:edgeComputationDiff}
    \end{minipage}
    \hspace{0.02\textwidth}
    \begin{minipage}[t]{0.31\textwidth}
        \centering
        \includegraphics[width=\textwidth]{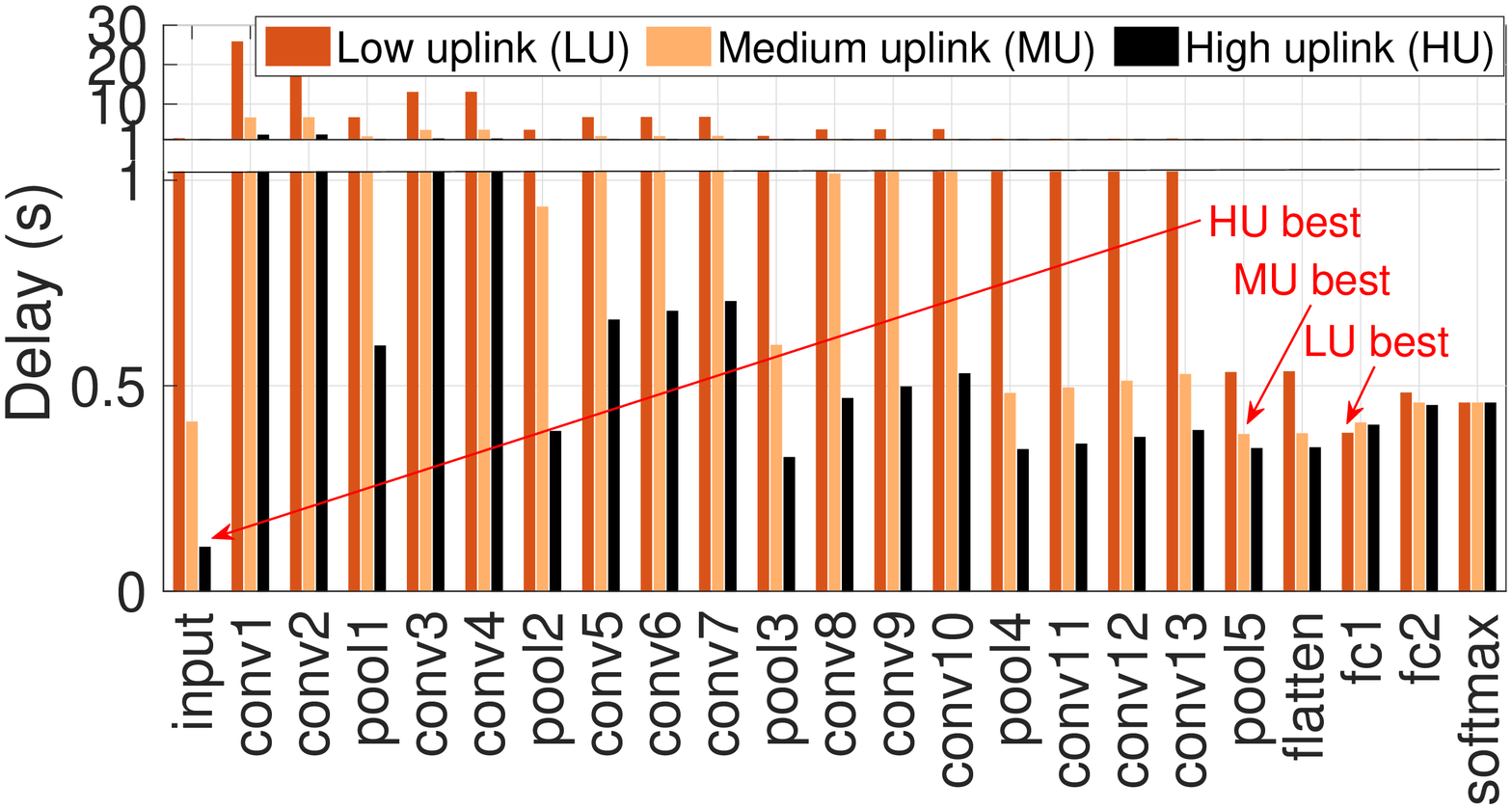}
        \caption{End-to-end delay at different partition points under different network conditions. (Vgg16)}
        \label{fig:networkSpeedDiff}
    \end{minipage}
    \vspace{-10 pt}
\end{figure*}

\vspace{-10 pt}

\subsection{Why Online Learning?}
    The crux of efficient collaborative deep inference is locating the optimal partition point of the DNN to distribute workload between the mobile device and the edge server. Existing efforts rely heavily on the offline profiling of the layer-wise DNN inference delay as a function of the system parameters, e.g., the uplink data transfer rate and the mobile device/edge server processing speed \cite{kang2017neurosurgeon, li2019edge, hu2019dynamic, eshratifar2019jointdnn}. With this offline-acquired knowledge, the partition point can be determined via solving an easy optimization problem. This method can be further extended to online adaptation using real-time input of system parameters. However, there are several major drawbacks of such an offline profiling approach. 
	
    \textbf{Adaptation to New Environment}: The knowledge acquired during offline profiling can be easily outdated considering the highly dynamic and uncertain environment. For example, the network uplink rate can change due to the dynamic spectrum management of the wireless carrier, the multi-user interference, and the mobility of mobile devices; the edge server processing capability may also change over time due to the edge server resource management to support multi-tenancy or even the change of edge servers themselves due to location change. Once the offline knowledge becomes outdated, its suggestion can lead to arbitrarily bad results. While performing offline profiling whenever a new environment is encountered is possible, it incurs significant overhead to generate accurate predictions.
	
	\textbf{Limited Feedback}: Existing offline profiling methods, even with periodic updating, require explicit real-time system parameters as input, e.g., the uplink data transfer rate and the workload on edge servers. These parameters, however, not only are ever-changing, but also can be very difficult for an end-user mobile device to obtain in practice, if not impossible. Often the case, the mobile device can observe only the overall delay between sending the data and receiving the inference result from the edge server, but is unable to accurately decompose this delay into different components (e.g., transmission delay and processing delay). This limited feedback challenge is similar to the congestion control problem in the classic Transport Control Protocol (TCP), where the end-user adjusts its congestion window based on only a binary congestion signal from the network as a summary of all network effects.
	
	\textbf{Layer Dependency}: Existing offline profiling methods adopt a layer-wise approach, which profiles the inference delay of each individual DNN layer depending on the system parameters. Clearly, profiling can be laborious for very deep neural networks as layers become many. More importantly, the layer-wise approach has an inherent drawback since it neglects the interdependency between layers. In fact, the overall inference delay is not even a simple sum of per-layer delay due to the inter-layer optimization performed by state-of-the-art DNN software packages, e.g., cuDNN \cite{chetlur2014cudnn}, which has a non-negligible impact on the total inference time. 

All these dynamics and uncertainties presented in the collaborative deep inference system and their a priori unknown impacts on the inference performance call for an online learning approach that can learn to locate the optimal partition point on-the-fly.

\vspace{-10 pt}

\subsection{Our Contribution}
In this paper, we design and build a collaborative deep inference system for video stream object detection, which contains a novel online learning module, called Autodidactic Neurosurgeon (ANS), to automatically learn the optimal partition point based on limited inference delay feedback. Object detection and tracking in video streams is a core component for many emerging intelligent applications and services, e.g., augmented reality, mobile navigation and autonomous driving. The mobile device in our system continuously receives video frames captured by an on-device camera, and selects, for each frame (or a small batch of video frames), a partition point to perform collaborative deep inference for object detection with the edge server. ANS has several salient features:

(1) ANS avoids the large overhead incurred in the laborious offline profiling stage. Instead, it learns the optimal partition on-the-fly and hence easily adapts to new environments.

(2) ANS does not need hard-to-acquire system parameters as input. Instead, it utilizes only the limited delay feedback of past collaborative inference tasks. 

(3) ANS exploits the intrinsic dependency of DNN layers without the need to learn each possible partition point individually, thereby tremendously accelerating the learning speed.

(4) ANS explicitly handles key frames captured in the video stream by assigning higher priority to those frames, thus providing differentiated service to frames during learning.

(5) ANS requires ultra-lightweight computation and minimal storage and hence, it is easy to deploy in practical systems. 

(6) The core of ANS is a novel online learning algorithm developed under the contextual bandit framework, and it has a provable performance guarantee. More technical innovations will be discussed later in Section \ref{sec:ANS}.

We highlight that as on-device processing is a special partition decision, ANS complements existing efforts such as DNN model compression on pushing DL intelligence into mobile devices, while providing added benefits by exploiting multi-access edge computing. We implement the collaborative deep inference system on a hardware testbed, where a Nvidia Jetson TX2 device, a fair representation of mobile devices, wirelessly connects to a GPU-powered edge server (Dell Alienware Desktop). Experiment results show that ANS is able to accurately learn the optimal partition point and hence accelerates deep inference for various DNN model structures under various wireless network settings. 
		
\section{System Architecture} \label{sec:system_model}
In this section, we describe the architecture of the collaborative deep inference system. A pictorial overview is given in Fig. \ref{fig:systemOverview}. 

\begin{figure}[!h]
    \centering
    \includegraphics[width=0.85\linewidth]{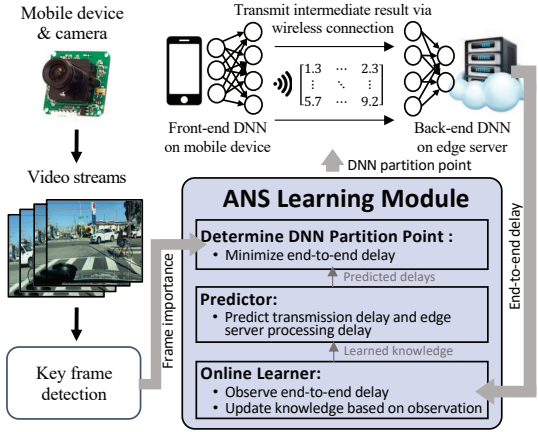}
    \caption{An overview of the system architecture.}
    \label{fig:systemOverview}
    \vspace{-10 pt}
\end{figure}

\vspace{-10 pt}
\subsection{Deep Neural Network Partition}
We first formalize DNN partitioning, discuss its impact on the end-to-end inference delay and introduce preliminaries. 
   	
\subsubsection*{\textbf{Marking Partition Points}} 
   			
Let $\mathcal{P} = \{0, 1, 2, \dots, P\}$ collect all potential DNN partition points. A partition point $p\in\mathcal{P}$ partitions a DNN into two parts: 1) the front-end part, $\texttt{DNN}^\texttt{front}_p$, contains layers from the input to the partition point $p\in\mathcal{P}$, and 2) the back-end part, $\texttt{DNN}^\texttt{back}_p$, contains layers from the partition point $p$ to the output layer. For example, if the partition point is placed at $p = 2$, then $\texttt{DNN}^\texttt{front}_p$ contains layers $\{1,2\}$ and $\texttt{DNN}^\texttt{back}_p$ contains layers $\{3, 4, \dots, P\}$. The partition points $p = 0$ and $p = P$ are the special cases: the partition $p = 0$ gives an empty $\texttt{DNN}^\texttt{front}_p$ which means the mobile device transmits raw input data to the edge server to run the entire DNN; the partition $p = P$ gives an empty $\texttt{DNN}^\texttt{back}_p$ indicating that all DNN layers are executed on the mobile device. The output of $\texttt{DNN}^\texttt{front}_p$ is called the intermediate result of partition $p$, denoted by $\psi_p$. Note that the intermediate output will be sent to the edge server for further processing, and we assume that the intermediate output $\psi_p$ includes necessary overhead for data packet transmission (e.g., packet header) and follow-up DNN merging (e.g., information about the partition point).

\subsubsection*{\textbf{Breakdown of DNN Inference Delay}}	
The end-to-end collaborative deep inference delay consists of three main parts: 
(1) \textbf{Front-end inference delay} $d^\texttt{f}_p$ of $\texttt{DNN}^\texttt{front}_p$ on the mobile device; (2) \textbf{Transmission delay} $d^\texttt{tx}_p$ for transmitting the intermediate output $\psi_p$ from the mobile device to the edge server; (3) \textbf{Back-end inference delay} $d^\text{b}_p$ of $\texttt{DNN}^\texttt{back}_p$ on the edge server.

The data size of the final inference result is usually small and hence the transmission delay for the final result return is neglected for the ease of problem formulation. The end-to-end inference delay with partition point $p$ is therefore $d_{p} := d_p^\texttt{f}  + d_p^\texttt{tx} + d_p^\texttt{b} + \eta$, where $\eta$ is a Gaussian random variable to model the randomness in the inference and transmission processes. 
   			
The transmission delay $d^\texttt{tx}_p$ is determined by the data size of the intermediate result $\psi_p$ and the wireless uplink transmission rate, which varies depending on the network condition. The inference delays $d^\texttt{f}_p$, $d^\texttt{b}_p$ of partitioned DNNs depend on many more factors: the number of DNN layers, the computational complexity of component layers, the inter-layer optimization, and also the processing speed of the mobile device/edge server. While some of them are fixed once the DNN structure is given (e.g., the number of layers and layer-wise computational complexity), others depend on the configuration of the computing platform (e.g., inter-layer optimization tools) and may also be time-varying (e.g., multi-user scheduling by the edge server). 
   		
We note that the configuration of the computing platform on the mobile device is relatively stable and fully revealed to the decision maker, i.e., the mobile device itself, and hence the front-end inference delay $d^\texttt{f}_p$ of $\texttt{DNN}^\texttt{front}_p$ can be easily measured statistically for a given DNN using methods similar to offline profiling. In the experiment, we use the application-specific profiling method in \cite{eshratifar2019jointdnn} to obtain the expected inference delay of $\texttt{DNN}^\texttt{front}_p$. Compared with the layer-wise statistical modeling method adopted in \cite{kang2017neurosurgeon,li2019edge,hu2019dynamic}, this method provides more accurate estimations because it takes into account the inter-layer optimization. Now, the key difficulty lies in learning $d^\texttt{tx}_p + d^\texttt{b}_p$ for different partition points as a result of unknown and time-varying edge computing capability and network condition. For ease of exposition, we define $d^\texttt{e}_p = d^\texttt{tx}_p + d^\texttt{b}_p$ as the \textbf{edge offloading delay}. 

\vspace{-10 pt}
\subsection{Edge Offloading Delay Prediction}
\label{sec:edgeLinearPrediction}
To obtain $d^\texttt{e}_p$, our idea is to learn a prediction model that maps contextual features of a partition point to the edge offloading delay. Since  learning works online, this prediction model updates itself using the limited feedback information about the past observed $d^\texttt{e}_p$ to closely follow the changes in the (unknown) system parameters. Using contextual features of partition points has a clear advantage over learning the delay performance of individual partition points separately, especially when the number of possible partition points is large. This is because the underlying relationship between different partition points is captured by their contextual features, and hence, knowledge gained by choosing one particular partition point can be easily transferred to learning about the performance of all other unselected partition points. 
	
\subsubsection*{\textbf{Constructing Contextual Features of Partitions}}
We first construct contextual features associated with $\texttt{DNN}^\texttt{back}_p$ that may affect $d^\text{b}_p$. Intuitively, the back-end inference delay is linearly dependent on the computation complexity of the back-end partition $\texttt{DNN}^\texttt{back}_p$, which usually is captured by the the number of \emph{multiply-accumulate}
(MAC) units contained in $\texttt{DNN}^\texttt{back}_p$. However, our experiment shows that the required computation time for one MAC unit is in fact different for different types of DNN layers. This is because different DNN layers allow different levels of parallel computation, especially when GPU is involved in the computation process. Since different partition points result in different combinations of layer types in  $\texttt{DNN}^\texttt{back}_p$, simply using the total number of MAC units to predict $d^\text{b}_p$ is problematic. To address this issue, instead of using a single scalar value for the total number of MAC units, we calculate the number of MAC units for each layer type, and use this vector for learning the inference delay. Specifically, we consider three main types of layers in DNN:  \emph{i}) convolutional layer, \emph{ii}) fully-connected layer, \emph{iii}) and activation layer, and count the number of MAC units in layers of different types, denoted by $m_p^\texttt{c}$, $m_p^\texttt{f}$ and $m_p^\texttt{a}$, respectively, for a given partition point $p$. In addition, we also count the number of convolutional layers $n_p^\texttt{c}$, fully-connected layers $n_p^\texttt{f}$, and activation layers $n_p^\texttt{a}$ in $\texttt{DNN}^\texttt{back}_p$. These numbers will affect inter-layer optimization and hence are also useful for learning the overall inference delay. 
	    
For the transmission delay $d^\texttt{tx}_p$, although the wireless uplink rate may be unknown, it is still clear that $d^\texttt{tx}_p$ linearly depends on the data size of the intermediate output $\psi_p$ of the front-end partition $\texttt{DNN}^\texttt{front}_p$. 
	    
In sum, the contextual feature of a partition point $p$ is collected in $\bm{x}_p = [ m_p^\texttt{c}, m_p^\texttt{f}, m_p^\texttt{a}, n_p^\texttt{c}, n_p^\texttt{f}, n_p^\texttt{a}, \psi_p ] ^\top$. Here, we slightly abuse notation to use $\psi_p$ to denote the data size of the intermediate results. 

In Fig. \ref{fig:contextualFeatureIllustration}, we provide an example to illustrate the contextual features of a particular partition point. 
	 
\subsubsection*{\textbf{Linear Prediction Model}}
Although the best model for predicting the edge offloading delay is unclear due to the obscured process of DNN inference, we adopt a linear model due to the reasons mentioned above. In addition, compared to other more complex and non-linear prediction models (such as a neural network), the linear model is much simpler and requires minimal resource on the mobile device. We show later in the experiments that this linear model is in fact validated to be a very good approximation.  
	    
Specifically, our prediction model has the form $d^\texttt{e}_p = \theta^\top \bm{x}_p$, where $\theta$ is the linear coefficients to be learned, which captures the unknown effects of the unknown system parameters (i.e., wireless uplink condition, computation capability of the edge server) on the delay performance. In runtime, the coefficients will be updated online as new observations of $d^\texttt{e}_p$ as a result of the partition decision $p$ are obtained. How to update these coefficients will be explained later in Section \ref{sec:ANS}. 
	    
Note that there is a practical reason why we learn $d^\texttt{e}_p$ as a whole rather than $d^\texttt{tx}_p$ and $d^\texttt{b}_p$ individually. As an end-user mobile device, it can observe $d^\texttt{e}_p$, by calculating the difference between the time when the data is sent and the time when the result is received for a selected partition point $p$. However, often it is very difficult for the mobile device to decompose this feedback into $d^\texttt{tx}_p$ and $d^\texttt{b}_p$ unless additional information is provided by the edge server. In this paper, we focus on this more challenging limited feedback scenario, although the individual feedback can also be easily incorporated into our framework.

\vspace{-10 pt}
\subsection{Object Detection in Video Stream}
Now, we explain how collaborative deep inference works in the context of video stream object detection. In video stream object detection, the mobile device uses its camera to capture a video and aims to in real-time detect objects in the successive frames of the video by feeding them one-by-one to a pre-trained DNN. The flow of frames is indexed by $\mathcal{T} = \{1,2,\dots,T\}$ and for each frame $t \in \mathcal{T}$, the mobile device has to pick a partition point $p_t$ to perform collaborative deep inference with the edge server. Note that pure on-device processing and pure edge offloading are special cases by choosing $p_t = P$ and $p_t = 0$, respectively. Once the inference is done, detection results (i.e., object bounding boxes and class labels) are displayed on the video. To assist online learning, the mobile device also records the actual edge offloading delay $d^\texttt{e}_p$ unless it chooses pure on-device processing (i.e. $p_t = P$). 
		
Suppose the linear coefficients $\theta$ are already learned, then the mobile device should pick a partition point to minimize the DNN inference delay by solving the following problem:
\begin{align}
    p_t := \argmin\nolimits_{p\in\mathcal{P}} ~ d^\texttt{f}_p +  \bm{\theta}^\top\bm{x}_p
\end{align}
where the first term is the front-end inference delay and the second term is the edge offloading delay. However, since the coefficients $\theta$ are \textit{a priori} unknown, the mobile device has to try different partition points and use the observed edge offloading delay feedback to form a good estimate of $\theta$. Clearly, there is subtle \textit{exploitation v.s. exploration tradeoff}, i.e., whether the mobile device should pick the partition point that solves the above minimization problem based on the current estimate of $\theta$ or pick other possible partition points to form a more accurate estimate of $\theta$ for future use. 
		
This exploitation v.s. exploration tradeoff is further complicated by \textit{key frames} in the video stream. Key frames are the most representative frames in video streams, which contain main elements or important events (e.g., entrance of new objects or scene change). It is often the case that these key frames have higher requirements on the inference performance, e.g., lower inference delay requirement. To provide differentiated quality of services to key and non-key frames, frames must be treated differently during online learning: while non-key frames may tolerate a larger inference delay as a result of exploring different partition points, key frames should be handled with more care using the best-known partition point as much as possible. 
		
Since key frame detection itself is not the focus or the main contribution of this paper, we apply one of the most widely-used key frame detection methods, namely structural similarity (SSIM) \cite{wang2004image}, to determine key frames. Fig. \ref{fig:keyFrameIllustration} illustrates the idea of SSIM. 
        
\begin{figure*}[tt]
    \centering
    \begin{minipage}[t]{0.34\textwidth}
        \centering
        \includegraphics[width=\textwidth]{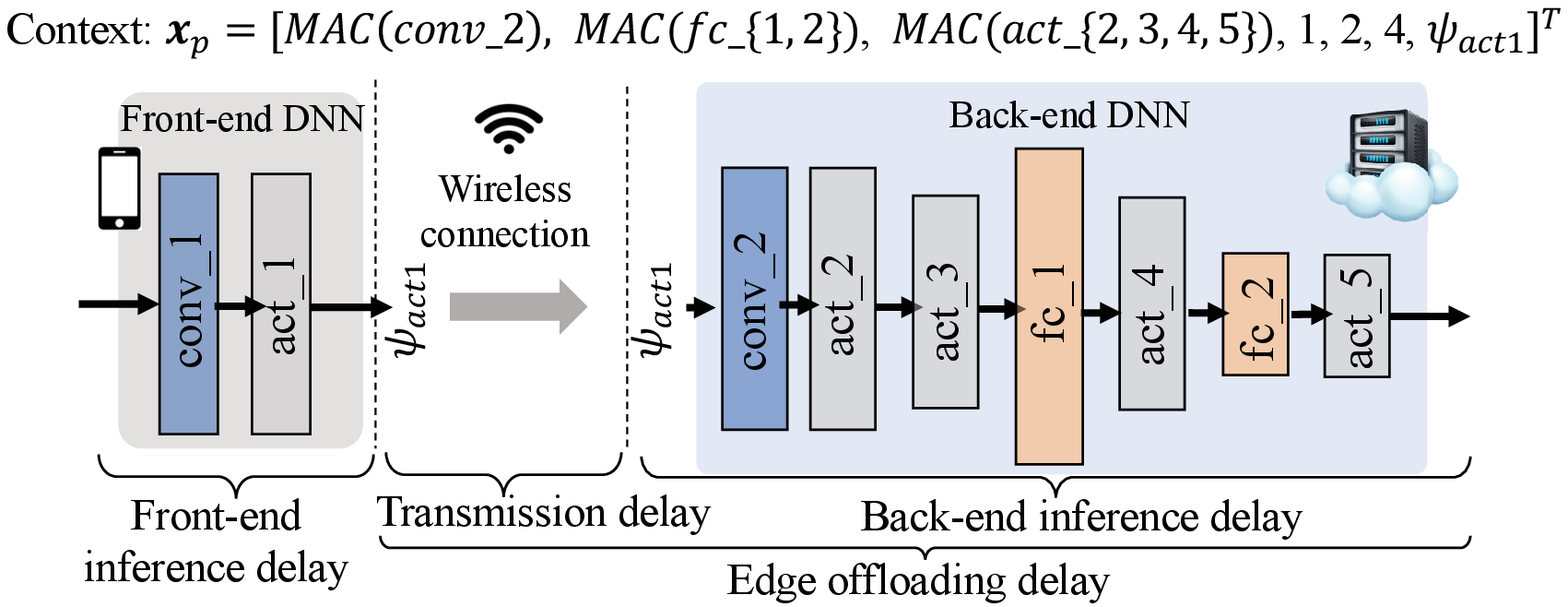}
        \caption{Contextual feature vector for a DNN partition point at \texttt{act\_1}.}
        \label{fig:contextualFeatureIllustration}
    \end{minipage}
    \hspace{0.01\textwidth}
    \begin{minipage}[t]{0.31\textwidth}
        \centering
        \includegraphics[width=\textwidth]{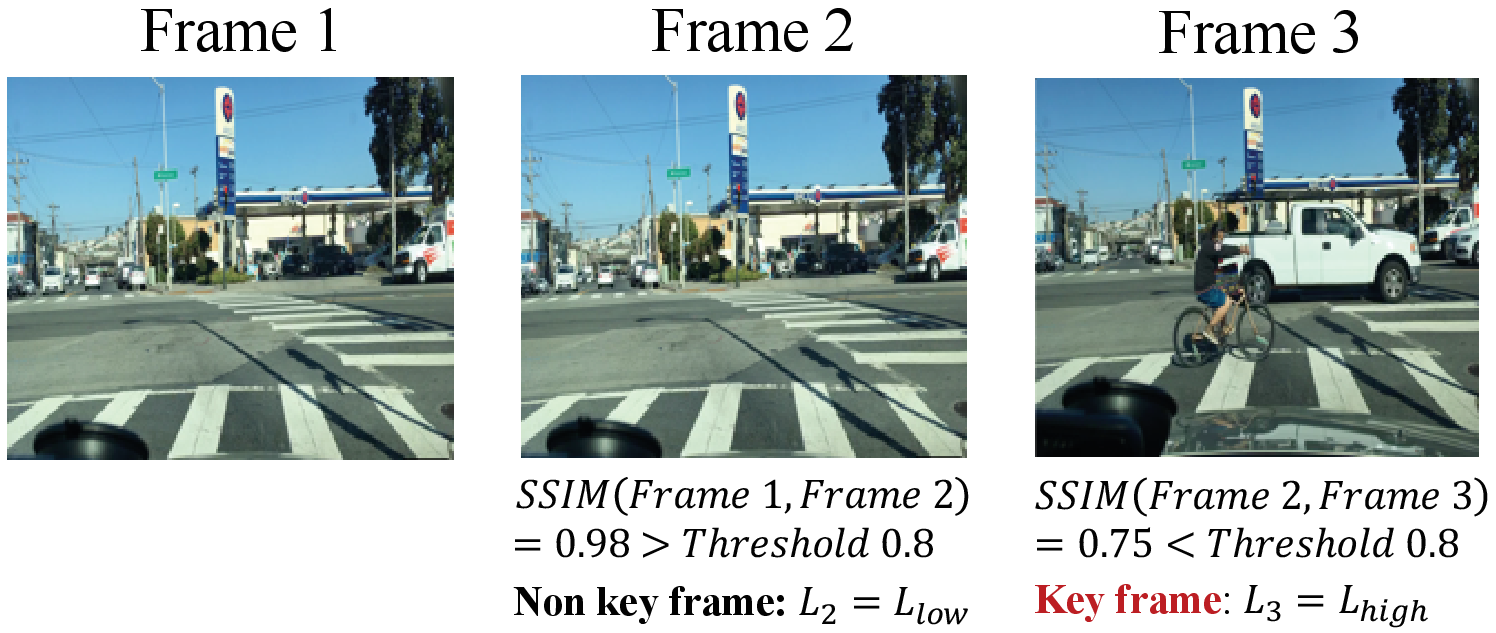}
        \caption{Key frame detection using SSIM. A key frame is detected if it is sufficiently different from the previous frame.}
        \label{fig:keyFrameIllustration}
    \end{minipage}
    \hspace{0.01\textwidth}
    \begin{minipage}[t]{0.31\textwidth}
        \centering
        \includegraphics[width=\textwidth]{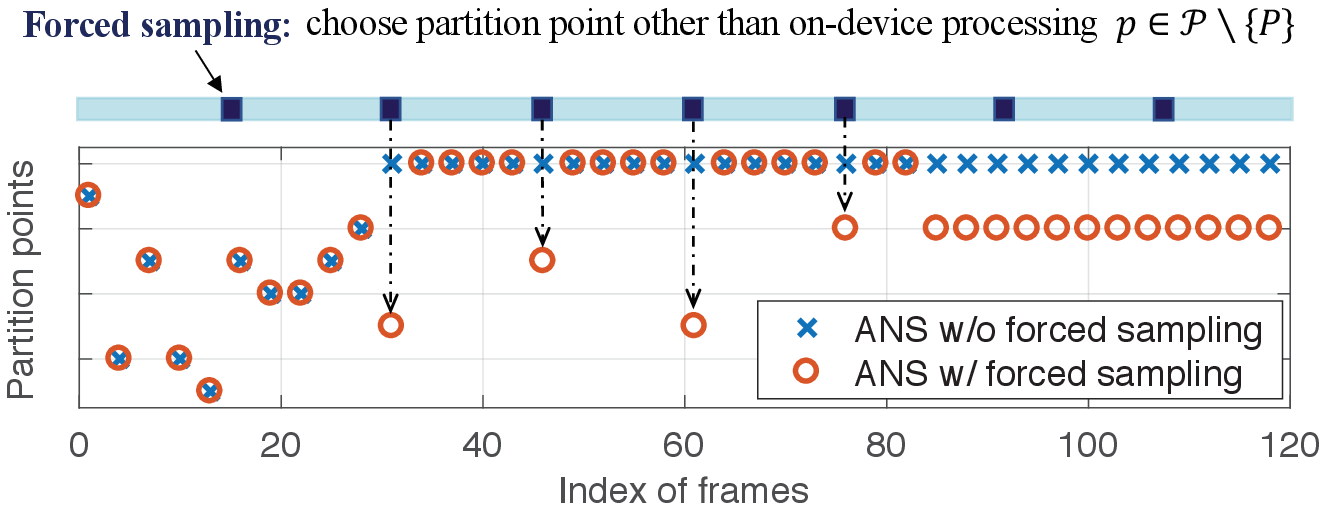}
        \caption{Forced sampling: forced sampling is activated only when the partition decision were to be on-device processing.}
        \label{fig:forcedSamplingIllustration}
    \end{minipage}
    \vspace{-0.2 in}
\end{figure*}

\section{Autodidactic Neurosurgeon}
\label{sec:ANS}
In this section, we describe the design of the online learning module, called Autodidactic NeuroSurgeon (ANS), in our collaborative deep inference system. The core of ANS is an online learning algorithm that can predict the inference delay of different partition points and base on the prediction to select partition points. Since we adopt a linear prediction model as explained in Section \ref{sec:edgeLinearPrediction}, LinUCB \cite{chu2011contextual}, a classic online learning algorithm for linear models that gracefully handles the exploitation v.s. exploration tradeoff, seems a good candidate for solving our problem. However, there are two unique challenges for LinUCB to work effectively in our system (which will be explained later). Therefore, a new online learning algorithm, called $\mu$LinUCB, is developed to support ANS. In what follows, we first explain how LinUCB works and its limitation in ANS. Next, we propose $\mu$LinUCB, prove its theoretical performance guarantee and analyze its complexity.

\vspace{-10 pt}
\subsection{LinUCB and its Limitation}
The basic idea of LinUCB is an online linear regression algorithm, which incrementally updates the linear coefficients using newly acquired feedback. However, when making decisions, LinUCB takes into account the confidence of the prediction for different actions' expected payoff (i.e., the delay of different partition points in our case). Put in the context of DNN partition, LinUCB maintains two auxiliary variables $\A \in \mathbb{R}^{d\times d}$ and $\b \in \mathbb{R}^{d\times 1}$ for estimating the coefficients $\bm{\theta}$. For each video frame $t$, $\bm{\theta}$ is estimated by $\hat{\bm{\theta}}_t = \A^{-1}_{t-1}\b_{t-1}$, and the partition point for frame $t$ is selected to be
\begin{align}\label{eq:linucb_opt}
	p_t = \argmin_{\p\in\mathcal{P}}~~d^\texttt{f}_p + \hat{\bm{\theta}}^\top\x_p - \alpha\sqrt{\x^\top_p \A^{-1}_{t-1}\x_p}
\end{align}	    
In the function to be minimized, the first term $d^\texttt{f}_p$ is the front-end inference delay of partition point $p$, which is assumed to be known; the second term $\hat{\bm{\theta}}^\top\x_p$ is the predicted edge offloading delay of partition point $p$ using the current estimate $\hat{\bm{\theta}}$; the last term $\alpha\sqrt{\x^\top_p \A^{-1}_{t-1}\x_p}$ represents the confidence interval of the edge offloading delay prediction. A larger confidence interval indicates that the prediction is not accurate enough and hence, even if the predicted delay of a partition point $p$ is low, the chance to select this partition point should be decreased. After the inference request is completed and the realized edge offloading delay $d_{p_t}^\texttt{e}$ is observed, the auxilary variables are incrementally updated as $\A_t \gets \A_{t-1} + \x_{p_t} \x^\top_{p_t}$ and $\b_t \gets \b_{t-1} + \x_{p_t}d_{p_t}^\texttt{e}$.

However, LinUCB has two major limitations for it to work effectively in ANS:
	    
\underline{\textbf{Limitation \#1}}: LinUCB treats each frame equally for the learning purpose. In other words, being a key frame or not does not affect the way LinUCB selects a partition point and hence, key frames can also experience high inference delay because of  unlucky bad choices of partition points due to exploration. 
	    
\underline{\textbf{Limitation \#2}}: This limitation is in fact detrimental. Among all possible partition points, the partition point $p = P$, or pure on-device processing, is actually a very special partition point that does \textit{not} follow the linear prediction model. This is because the edge offloading delay is always 0 once $p = P$ is selected and \textit{any} linear coefficient is a ``correct'' coefficient since the contextual feature associated with $p = P$ is a zero vector. If, for some video frames, $p = P$ is selected by LinUCB for deep inference, then the auxiliary variables $A_t$ and $b_t$ do not get updated since there is no feedback/new information about the edge offloading delay. As a result, LinUCB will select $p = P$ according to the selection rule \eqref{eq:linucb_opt} for the next video frame and thereafter, essentially being forced to stop learning and trapped in pure on-device processing for all future video frames. Therefore, LinUCB fails to work in ANS.

\vspace{-10 pt}
\subsection{$\mu$LinUCB} \label{mu_algorithm}
In light of the limitations of LinUCB, we propose a new online learning algorithm, called $\mu$LinUCB, by providing mitigation mechanisms to LinUCB in order to support ANS. As we will see, these mitigation mechanisms are quite intuitive. However, rather than being heuristic, they grant $\mu$LinUCB  provable performance guarantee with a careful choice of algorithm parameters.

\underline{\textbf{Mitigation \#1}}: To provide differentiated inference service to key and non-key frames, ANS assigns weights to frames and incorporates these weights when selecting partition points. Specifically, each frame $t$ is assigned with a weight $L_t$ depending on whether it is a key frame or not (or the likelihood of being a key frame), and the partition point selection rule is modified to be
\begin{align}\label{eq:mulinucb_opt}
    p_t = \argmin_{\p\in\mathcal{P}}~~d^\texttt{f}_p + \hat{\bm{\theta}}^\top\x_p - \alpha\sqrt{(1-L_t)\x^\top_p \A^{-1}_{t-1}\x_p}
\end{align}           
As key frames will receive a larger weight, the confidence interval (i.e., the third term) plays a smaller role when ANS selects a partition point. Therefore, ANS tends to play safe with key frames by exploiting partition points that are so far learned to be good. 
        
\underline{\textbf{Mitigation \#2}}: To escape from being trapped in pure on-device processing, a natural idea is to add randomness in partition point selection. Because partition points other than the pure on-device processing have a chance to be selected, new knowledge about the edge offloading delay and hence $\bm{\theta}$ can be acquired. Our implementation of this randomness idea is through a \textit{forced sampling} technique. Specifically, for a total number of $T$ video frames, we define a forced sampling sequence $\mathcal{F} = \{t | t = nT^\mu, t \le T, n = 1,2,\dots \}$, where $\mu$ is a design parameter. If the index $t$ of a video frame belongs to $\mathcal{F}$, then $\mu$LinUCB forces ANS to sample a partition point other than $p = P$. In other words, $p = P$ is not an option for these frames. According to the design of the sequence, forced sampling occurs every $T^\mu$ frames. Note that, forced sampling has no effect on frames when $p = P$ is not the selected partition point if the classic LinUCB were applied. Fig. \ref{fig:forcedSamplingIllustration} illustrates the idea of forced sampling.

The pseudocode of $\mu$LinUCB is given in Algorithm \ref{alg:mulinucb}. It follows the same idea of estimating $\bm{\theta}$ using an online linear regresser as LinUCB. The key difference is that it incorporates the frame importance weights and forced sampling when making partition point selection decisions.

\begin{algorithm}
	\caption{ASN with $\mu$LinUCB algorithm}
	\begin{algorithmic}[1]
		\State Construct context $x_p$ for candidate partition points $\forall p \in \mathcal{P}$				
		\State Obtain front-end inference delay estimate $d^\texttt{f}_p, \forall p \in \mathcal{P}$
		\State Determine forced sampling sequence $\mathcal{F}$ .				
		\State Initialize $\A_0 = \beta I_d$, $b_0 = \boldsymbol{0}$.
				
		\For {each frame $t = 1, 2, \cdots, T$}
			\State Detect key frame and assign weight $L_t$
			\State  Compute current estimate $\hat{\bm{\theta}}_t = \A_{t-1}^{-1} b_{t-1}$.
				\For {each candidate partition point $p \in \mathcal{P}$}
				    \State Compute $\hat{d}^\texttt{e}_p = \hat{\bm{\theta}}^\top_t\x_p - \alpha\sqrt{(1-L_t)\x^\top_p \A^{-1}_{t-1}\x_p}$.
				\EndFor
				\If {$t \in \mathcal{F}$} {\color{red} \Comment{Forced sampling}}
				    \State Choose $p_t = \arg \min_{p \in \mathcal{P}_{{\{\ne P\}}}} d^\texttt{f}_p + \hat{d}^\texttt{e}_p$.
				\Else
                    \State Choose $p_t = \arg \min_{p \in \mathcal{P}} d^\texttt{f}_p + \hat{d}^\texttt{e}_p$.
			    \EndIf
			    \If {$p_t \ne P$} {\color{red}\Comment{Not pure on-device processing}}
			        \State Observe $d^\texttt{e}_{p_t}$ once inference is done.
			    	\State $\A_t \gets \A_{t-1} + \x_{p_t} \x^\top_{p_t},~~\b_t \gets  \b_{t-1} + \x_{p_t}d^\texttt{e}_{p_t}$.
			    \Else
			        \State $\A_t = \A_{t-1},~~ \b_t = \b_{t-1}$.
			    \EndIf
		\EndFor
	\end{algorithmic}
    \label{alg:mulinucb}
\end{algorithm}

\subsubsection*{\textbf{Theoretical Performance Guarantee}}
The parameter $\mu$ is a critical parameter of $\mu$LinUCB (hence the name), which controls the frequency of forced sampling. There is indeed a crucial tradeoff that determines the performance of ANS. Since forced sampling always selects a suboptimal partition point when $p = P$ (i.e., pure on-device processing) is indeed the best option, a smaller $\mu$ (i.e., more frequent forced sampling) results in increased averaged inference delay. With a larger $\mu$ (i.e., less frequent forced sampling), ANS can be trapped in pure on-device processing for a long time if $p = P$ is indeed a suboptimal option, again leading to increased average inference delay. In the theorem below, we characterize what is a good choice of $\mu$ and the resulting performance of $\mu$LinUCB. 
        
\begin{theorem}\label{theo:regertbound} 
Under mild technical assumptions, the regret (i.e., the delay performance difference compared to an oracle algorithm that selects the optimal partition point for all $T$ frames) of $\mu$LinUCB, denoted by $R(T)$, satisfies: $\forall \delta \in (0, 1)$, with probability at least $1-\delta$, $R(T)$ can be upper bounded by 
\begin{equation}\label{regretBound}
    \max\{O(T^{0.5+\mu}\log(T/\delta)), O(T^{1-\mu})\}
\end{equation}
\end{theorem}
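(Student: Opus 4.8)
The plan is to bound the regret on a high-probability ``good event'' on which the ridge estimate $\hat{\bm{\theta}}_t$ is simultaneously accurate for every round and every partition context. First I would invoke a self-normalized martingale concentration bound (as used in the analysis of LinUCB) to show that, for a confidence radius $\alpha = \Theta(\sqrt{d\log(T/\delta)})$ and regularizer $\beta$, the event
\[
 \mathcal{E} = \{\, |(\hat{\bm{\theta}}_t - \bm{\theta})^\top \x_p| \le \alpha\sqrt{\x_p^\top \A_{t-1}^{-1}\x_p}\ \ \forall t,\, \forall p \,\}
\]
holds with probability at least $1-\delta$; here I use the mild assumptions that the noise $\eta$ is sub-Gaussian and that contexts and delays are bounded. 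On $\mathcal{E}$ the unweighted lower-confidence estimate $d^\texttt{f}_p + \hat{\bm{\theta}}_t^\top\x_p - \alpha\sqrt{\x_p^\top\A_{t-1}^{-1}\x_p}$ underestimates the true cost $d^\texttt{f}_p + \bm{\theta}^\top\x_p$, which is the optimism (here pessimism) property that drives the rest of the argument.

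Next I would decompose $R(T)$ according to whether a frame lies in the forced-sampling set $\mathcal{F}$ and whether the selected point is $p=P$. The forced-sampling frames are the price of escaping the on-device trap: there are $|\mathcal{F}| = \lfloor T^{1-\mu}\rfloor$ of them, and on each the algorithm is barred from $p=P$, so when $p=P$ is in fact optimal a forced frame pays at most the bounded gap $\Delta_{\max}$; summing gives the $O(T^{1-\mu})$ term. For the ordinary frames I would argue that $\mathcal{E}$ rules out the trapping described in Limitation~\#2: since $\x_P=\bm 0$ makes the confidence width at $p=P$ zero, whenever $p=P$ is truly suboptimal its cost strictly exceeds the pessimistic estimate of the optimal point, so the selection rule never picks $p=P$, feedback is obtained, and $\A_t$ is updated. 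The remaining difficulty is that updates occur only on feedback frames, so the standard elliptic-potential telescoping cannot be applied round-by-round; instead I would charge each ordinary frame to the most recent confidence width, which shrinks only as fast as information arrives, and information is guaranteed to arrive at least every $T^{\mu}$ frames through $\mathcal{F}$. Combining the per-frame width with this $T^{\mu}$ spacing and the boundedness of $\sum_t \sqrt{\x_{p_t}^\top \A_{t-1}^{-1}\x_{p_t}}$ (elliptical potential / minimum-eigenvalue growth under a context-diversity assumption) yields the $O(T^{0.5+\mu}\log(T/\delta))$ term.

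Finally I would account for the key-frame weighting $1-L_t$. Because $0\le 1-L_t\le 1$, the weighted bonus in \eqref{eq:mulinucb_opt} is sandwiched between $0$ and the unweighted bonus, so on $\mathcal{E}$ the only new effect is that optimism may fail on key frames, permitting a few extra exploitation mistakes there; I would bound their contribution by the summed gap between the weighted and unweighted bonuses, which is of the same $\tilde{O}(T^{0.5+\mu})$ order and is thus absorbed. Taking the worse of the two exponents gives the stated bound $\max\{O(T^{0.5+\mu}\log(T/\delta)),\,O(T^{1-\mu})\}$, minimized near $\mu=\tfrac14$. I expect the main obstacle to be precisely the no-feedback structure at $p=P$: reconciling the intermittent updates with a potential-style bound, and proving that forced sampling both guarantees escape from the trap and limits how stale the confidence widths can become, is the step where the $T^{\mu}$ inflation over the usual $\sqrt{T}$ LinUCB rate genuinely arises.
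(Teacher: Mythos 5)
Your overall architecture is the same as the paper's: a uniform self-normalized confidence bound for the intermittently updated ridge estimate, a three-way split into feedback frames, no-feedback frames charged at multiplicity $T^\mu$ to the most recent update, and forced-sampling frames paying at most $\Delta_{\max}$ each for a total of $O(T^{1-\mu})$, with the elliptic-potential lemma supplying the $O(T^{0.5+\mu}\log(T/\delta))$ term. One place you diverge: you argue that on the good event the rule never selects $p=P$ when $P$ is strictly suboptimal (since $\x_P=\bm{0}$ collapses its index to the deterministic $d^\texttt{f}_P$ while the optimum's lower-confidence index sits below its true cost), whereas the paper keeps that case and bounds its one-step regret by $3\alpha\sqrt{\x_{\hat p}^\top\A^{-1}\x_{\hat p}}$ via an auxiliary action. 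Your observation is correct \emph{provided optimism holds uniformly}, and would even tighten the no-feedback contribution.

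That proviso is exactly where your sketch has a genuine gap: the key-frame weighting. You keep the unweighted radius $\alpha=\Theta(\sqrt{d\log(T/\delta)})$, accept that optimism ``may fail on key frames,'' and propose to absorb the failure by ``the summed gap between the weighted and unweighted bonuses.'' Written out, the decomposition gives $R_t \le 2\alpha\sqrt{\x_{p_t}^\top\A^{-1}\x_{p_t}} + \alpha\bigl(1-\sqrt{1-L_t}\bigr)\sqrt{\x_{p^*}^\top\A^{-1}\x_{p^*}}$, and the residual term is evaluated at the \emph{optimal} arm's context, not the selected one. The elliptic-potential lemma only controls the sum of widths along the \emph{played} contexts, so this residual is not absorbed by it; a priori it is only $O(\alpha T)$. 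The same issue undermines the ``never selects $P$'' claim, since a deflated bonus can let the lower-confidence index of $p^*$ rise above $d^\texttt{f}_P$. The paper's fix is to inflate the radius to $\alpha = \bigl(C_\theta + C_\eta\sqrt{d\log((1+MC_x^2)/\delta)}\bigr)/(1-L_\text{key})$, which makes the \emph{weighted} bonus $\alpha\sqrt{(1-L_t)\x_p^\top\A^{-1}\x_p}$ a valid confidence width on every frame (using $L_t\le L_\text{key}<1$), so optimism never fails and every regret term lands on the played context. You need this rescaling (or an equivalent one) for the stated bound; it is also why $L_\text{key}<1$ appears among the ``mild technical assumptions.''
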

\begin{proof}
	The proof is given in Appendix \ref{sec:proofOfBound}.
\end{proof}

According to Theorem \ref{theo:regertbound}, by choosing $\mu \in (0, 0.5)$, the regret bound is sublinear in $T$, implying that the average end-to-end inference delay asymptotically achieves the best possible end-to-end inference delay when $T \to \infty$. For a finite $T$, this bound also gives a characterization of the convergence speed of $\mu$LinUCB. In addition, by choosing $\mu = 0.25$, the order of the regret bound is minimized at $O(T^{0.75}\log(T))$. 
\begin{figure}[t]
    \includegraphics[width=0.85\linewidth]{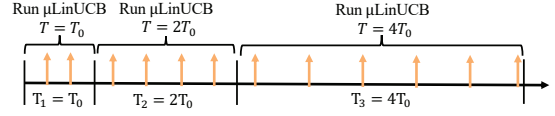}
    \caption{Forced sampling frequency decreases as phase length increases.}
    \label{fig:doublingTrickIllustration}
    \vspace{-20 pt}
\end{figure}

\subsubsection*{\textbf{Handling Unknown $T$}}
$\mu$LinUCB requires knowing the number $T$ of video frames for object detection to determine the frequency of forced sampling. This is clearly an ideal scenario and largely does not hold in practice. To handle the scenario when $T$ is unknown, $\mu$LinUCB can be modified as follows: $\mu$LinUCB starts with a large frequency of forced sampling and gradually reduces the frequency as more video frames have been analyzed. Gradually reducing the forced sampling frequency is reasonable because as more video frames have passed, ANS has obtained a more accurate estimate of $\theta$. Therefore, if pure on-device processing, i.e., $p = P$, is selected by ANS after many video frames, it is very likely that pure on-device processing is indeed the best inference option and hence, less forced sampling should be performed to reduce the unnecessary overhead. To give a concrete example of this strategy, we can divide the sequence of video frames into phases, indexed by $i = 1, 2, ...$. Each phase $i$ contains $T_i = \lfloor 2^i \cdot T_0\rfloor_{ i = \{1, 2, \cdots \}}$ video frames where $T_0 \in \mathbb{N}^+$ is an integer constant. Within each phase, ANS runs $\mu$LinUCB with a known number of video frames $T_i$. As $T_i$ is doubled every phase, the forced sampling interval, namely $T_i^\mu$, also increases. This means that the forced sampling frequency decreases. For this particular strategy, it can  still be proved that $\mu$LinUCB achieves a sublinear regret bound when $\mu \in (0, 0.5)$ even if $T$ is unknown. Fig. \ref{fig:doublingTrickIllustration} illustrates this increasingly sparse forced sampling sequence. 

\subsubsection*{\textbf{Complexity Analysis}}
For a DNN with $P$ possible partition points and a contextual feature vector of size $d$, we analyze the time and space complexity of $\mu$LinUCB for each frame $t$ as follows. \textit{Time Complexity}. Inversing the matrix $\A_t$ incurs a time complexity $O(d^3)$ \cite{boley1987survey}. Obtaining $\hat{\bm{\theta}}_t$ incurs a time complexity $O(d^2)$. Computing $\hat{d}^\texttt{e}_p$ for every possible partition point $p$ has a complexity $O(d^2 + 2d)$ and hence, the total complexity is $O(P(d^2 + 2d))$. Comparing $\hat{d}^\texttt{e}_p$ to obtain the optimal partition point has a complexity $O(P)$. Finally, updating $\A_t$ and $\b_t$ has a complexity $O(d^2 + d)$. Therefore, the total time complexity is $\max\{O(d^3), O((P+2)d^2\}$. Since $d$ is usually small (in our implementation $d = 7$), the overall time complexity for each frame is linear in the number of partition points. \textit{Space Complexity}. $\mu$LinUCB needs to keep variables $\A_{t-1}$, $\b_{t-1}$, $x_p$, $d^\texttt{f}_p$ and a constant  indicating the forced sampling frequency in memory. To compute $\hat{d}^e_p$, additional temporary memory is needed, which has space complexity $O(d^2 + 2d + P)$. Overall, the space complexity is $O(2d^2 + (P+3)d + 2d)$. Again, because $d$ is small, the space complexity is linear in $P$. In sum, $\mu$LinUCB incurs negligible computation complexity compared to regular deep inference tasks as it involves only a small number of simple operations and requires keeping a small number of variables.

\vspace{-10 pt}

\section{Experiment Results}
\subsection{Implementation and Setup}
\textbf{Testbed.} We build a hardware testbed to validate the design of ANS and evaluate its performance in a collaborative deep inference system for video stream object detection. We use NVIDIA Jetson TX2 developer module as the mobile device. It is equipped with a NVIDIA Pascal GPU, a shared 8 GB 128 bit LPDDR4 memory between GPU and CPU and an on-board camera. A Dell Alienware workstation is employed as the edge server, which is equipped with Intel Core i7-8700K CPU@3.70GHZ$\times12$, two Nvidia GeForce GTX 1080 Ti GPUs, and 11 GB memory. The mobile device and edge server are wirelessly connected by  point-to-point Wi-Fi, and we use WonderShaper \cite{wondershaper} to set the wireless transmission speed to emulate different network conditions. 
		
\textbf{DL Models and Platforms.} Three state-of-the-art DNNs, namely Vgg16 \cite{simonyan2014very}, YoLo \cite{redmon2016you} and ResNet50 \cite{he2016deep} are considered in the experiment. We implement ANS on both TensorFlow and PyTorch, two popular machine learning platforms, and run deep inference on these DNN models and perform DNN partitioning. We use Netscope Analyzer \cite{netscope}, a web-based tool, for visualizing and analyzing DNN network architectures. For chain topology DNNs, we mark a partition point after each layer. However, it should be noted that some DNN models are not chain topology, in which case the residual block method \cite{eshratifar2019bottlenet} can be used to determine the partition points (e.g., ResNet50 has 16 concatenated residual blocks).
        
\textbf{Video Input and Detection Output.} The input video is captured by on-board camera of the mobile device using OpenCV. The video frames are captured with $1280 \times 720$ pixels and then resized to $640\times480$ pixels for screen display. For DNN inferences, these video frames are converted to image inputs of default size $\texttt{Width}\times\texttt{Height}\times\texttt{Channels}$: Vgg16  -- $224\times224\times3$; YoLo -- $416\times416\times3$; ResNet -- $224\times224\times3$. After the DNN processes the video frame, detection output is generated, which includes the bounding box locations of candidate objects, their predicted class labels and corresponding confidence. The bounding boxes and class labels are then displayed on a screen connected to the mobile device. 

\textbf{Benchmarks.} The following benchmarks are used in the evaluation of the online learning module ANS.

(1) Oracle: Oracle selects the optimal partition point for each frame. We obtain the Oracle decision by measuring the performance of all partition points for 100 times and then picking the partition point with the minimum average delay.

(2) Pure Edge Offloading (EO): The mobile device offloads all frames to the edge server. The frames are processed on edge server and the results are then returned to the mobile device.

(3) Pure On-device Processing (MO): All the frames are processed on the mobile device with no offloading.

(4) Neurosurgeon \cite{kang2017neurosurgeon}: It is an offline profiling approach for collaborative deep inference.

\subsection{Results and Discussions}
\begin{table}[t]
	\centering
	\caption{Prediction error of ANS and layer-wise method when the uplink rate is high/medium/low and the edge server uses GPU/CPU.}
	\begin{tabular}{|>{\raggedright}p{0.20\linewidth}|p{0.07\linewidth} p{0.07\linewidth} p{0.09\linewidth} |p{0.08\linewidth} p{0.08\linewidth} p{0.09\linewidth}|}
		\hline
		\multirow{2}{*}{}& \multicolumn{3}{c|}{ANS} & \multicolumn{3}{c|}{Layer-wise} \\
		\cline{2-7}
		& Vgg16  & YoLo & ResNet & Vgg16  & YoLo & ResNet\\
		\hline
		Low/GPU &  0.43\% & 0.52\% & 3.52\% & 9.87\%  & 9.90\% & 12.68\% \\
		Medium/GPU &  3.01\% & 3.11\% & 5.10\% & 19.33\% & 14.23\% & 21.12\% \\
		High/GPU &  3.06\% & 3.94\% & 9.97\% & 21.42\%  & 22.70\% & 51.58\% \\
		Low/CPU &  0.39\% & 0.67\% & 2.97\% & 8.97\% & 10.02\% & 13.94\% \\
		Medium/CPU &  2.98\% & 3.15\% & 4.78\% & 21.69\% & 16.32\% & 28.53\% \\
		High/CPU &  3.12\% & 3.36\% & 7.96\% & 25.76\% & 24.35\% & 49.61\% \\
		\hline
	\end{tabular}
	\label{estimationError}
\vspace{-10pt}
\end{table}

\subsubsection*{\textbf{Delay Prediction Error and Learning Convergence}}

\begin{figure}[h]
\begin{minipage}[t]{0.46\linewidth}
\includegraphics[width=\textwidth]{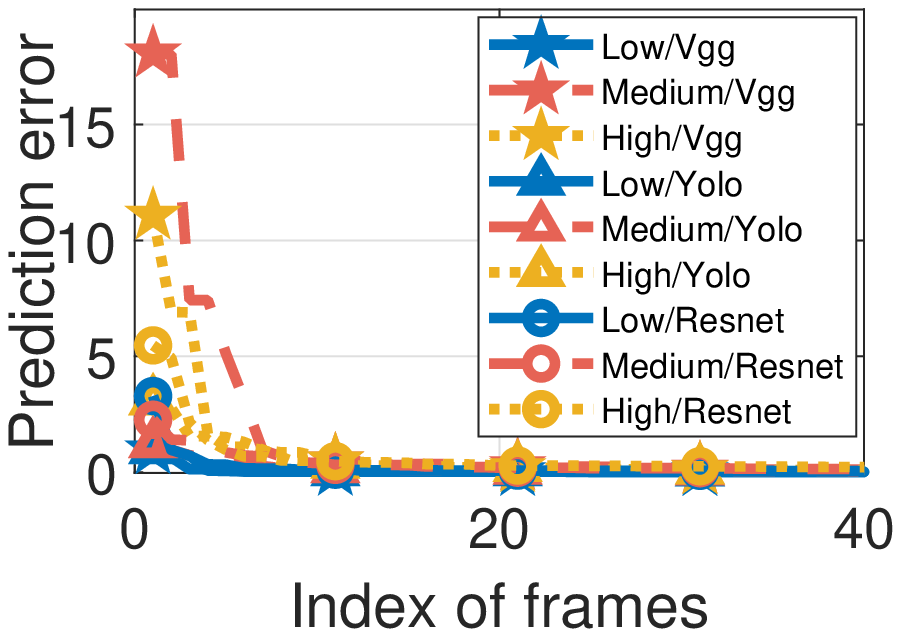}
\vspace{-10 pt}
\caption{Online prediction error reduces as more frames are analyzed.}
\label{fig:errorEvolution}
\end{minipage}
\hspace{0.03\linewidth}
\begin{minipage}[t]{0.46\linewidth}
\includegraphics[width=\textwidth]{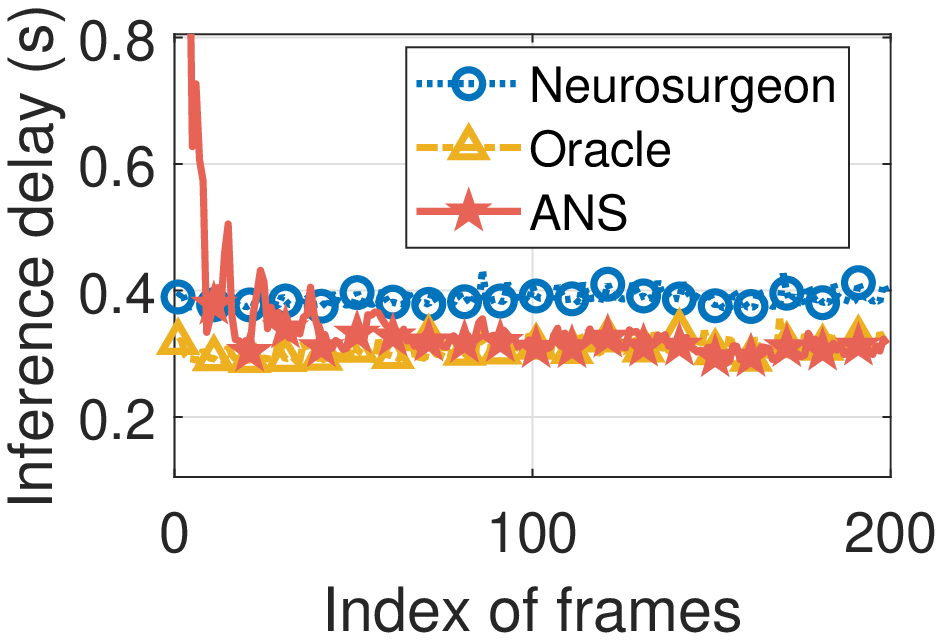}
\vspace{-10 pt}
\caption{End-to-end inference delay reduces as more frames are analyzed.}
\label{fig:compareMethods}
\end{minipage}
\vspace{-10 pt}
\end{figure}

Table \ref{estimationError} shows the edge offloading delay prediction error of ANS after 300 video frames, which is also compared to the layer-wise method used in \cite{hu2019dynamic, kang2017neurosurgeon, li2019edge} that neglects the inter-layer optimization. In all the tested environments (i.e., different combinations of network condition and edge computing capability), ANS achieves an excellent prediction performance, far outperforming the layer-wise method. The improvement is the most significant in high uplink rate scenarios because the back-end inference delay is dominant in the edge offloading delay and neglecting the impact of inter-layer optimization on the inference delay introduces significant errors. Fig. \ref{fig:errorEvolution} further shows how the prediction error evolves as more video frames have been analyzed for ANS. As can been seen, ANS learns very fast and can accurately predict delay in about 20 video frames. 
Fig. \ref{fig:compareMethods} shows the runtime average end-to-end inference delay achieved by ANS, Oracle and Neurosurgeon. The average delay of ANS quickly converges to that of Oracle in about 80 frames, starting from zero knowledge about the system environment. Compared to Neurosurgeon, both ANS and Oracle is better because they consider the inter-layer optimization during DNN inference while Neurosurgeon performs layer-wise profiling. In fact, this comparison is not fair to ANS as Neurosurgeon requires the real-time information about the system (i.e., the transmission rate and edge server workload) while ANS does not.

\subsubsection*{\textbf{End-to-End Inference Delay Improvement}}

\begin{figure*}[t]
    \subfigure[Vgg16]{
    \includegraphics[width=0.24\textwidth]{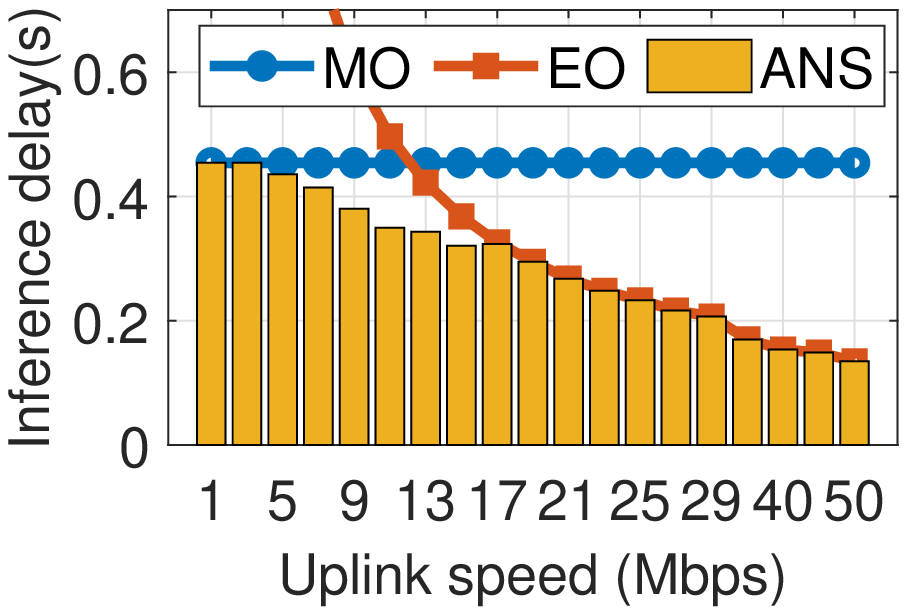}}
    \subfigure[YoLo]{
    \includegraphics[width=0.24\textwidth]{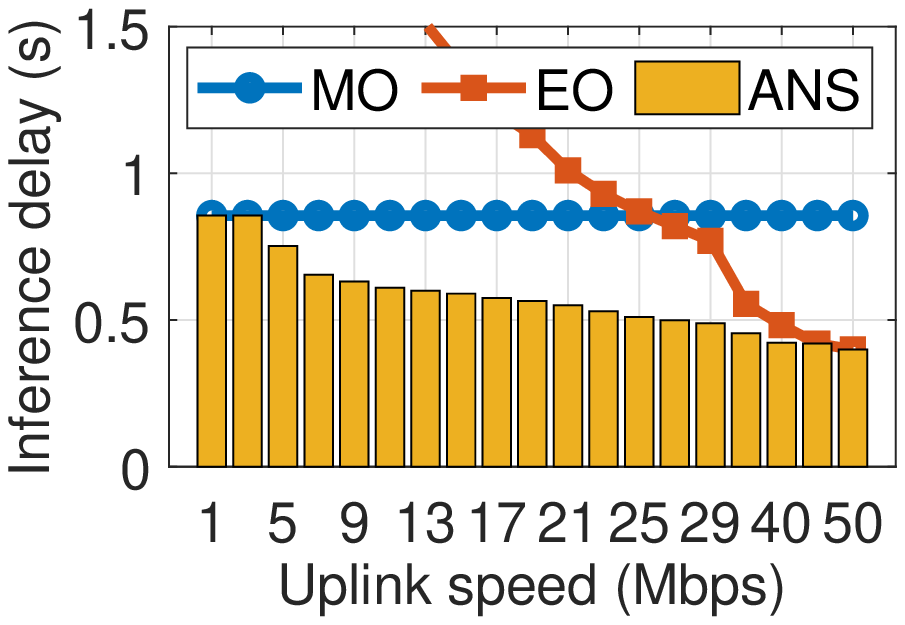}}
    \subfigure[ResNet]{
    \includegraphics[width=0.24\textwidth]{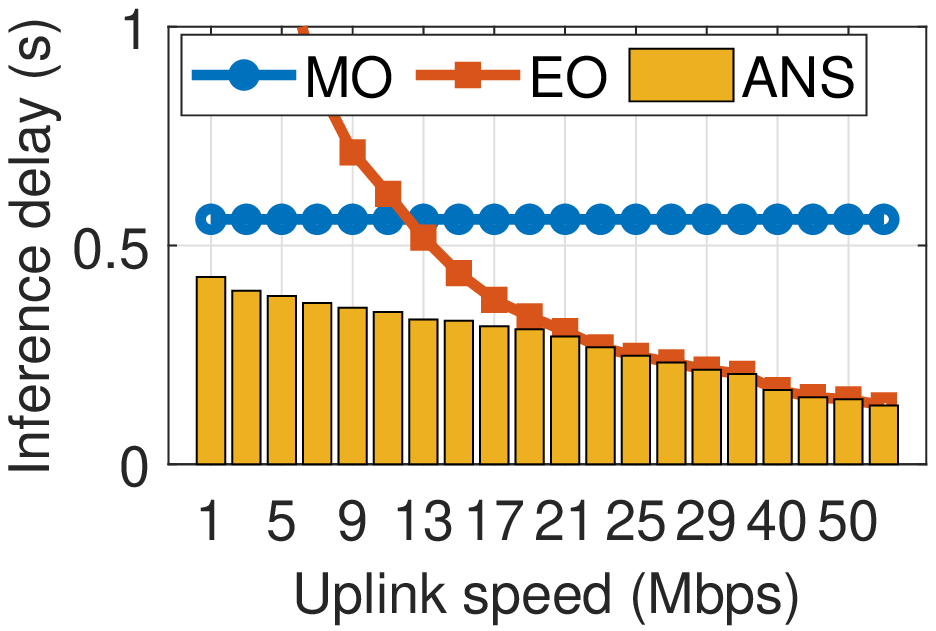}}
    \subfigure[Best Reduction]{
    \includegraphics[width=0.24\textwidth]{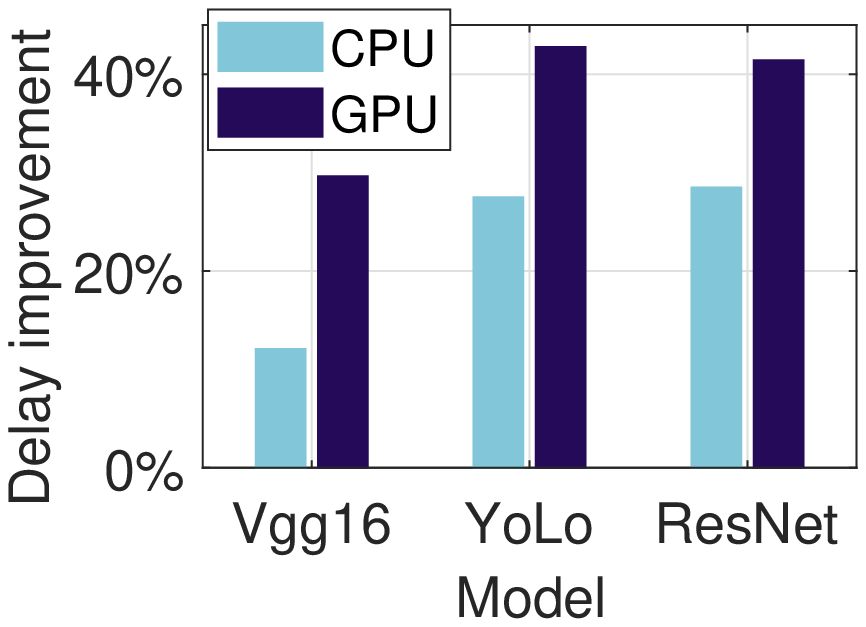}}
    \vspace{-0.2 in}
    \caption{End-to-end inference delay achieved by MO, EO and ANS, and delay reduction of ANS.}
    \label{fig:NetworkImpact}
    \vspace{-10 pt}
\end{figure*}

Fig. \ref{fig:NetworkImpact} shows the end-to-end inference delay achieved by MO, EO and ANS under different uplink transmission rates when the edge server uses GPU. When the transmission rate is low, the inference delay of ANS is close to MO. This is because ANS tends to run the entire DNN on the mobile device to avoid large transmission delay for sending data to the edge server. When the transmission rate is high, the inference delay of ANS is close to EO. This is because running the entire DNN on the edge server significantly reduces the inference delay at a small extra cost due to the small transmission delay. When the transmission rate is moderate, ANS is able to make an effective trade-off between on-device processing and edge offloading. Fig. \ref{fig:NetworkImpact}(d) summarizes the end-to-end delay improvement in the best cases when the edge server uses CPU or GPU for all three DNNs. As it suggests, collaborative inference using ANS achieves a larger improvement when the edge server is more powerful. 

\subsubsection*{\textbf{Adaption to Changing Environment}}
Fig. \ref{fig:dynamicvgg} shows how ANS can track the change of the environment and adapt its partition point when the network condition changes. The upper subplot shows the evolution of the uplink transmission rate over time. The middle subplot shows the partition points selected by ANS with $\mu$LinUCB. As a comparison, we also show in the bottom subplot the partition results of the classic LinUCB. At the beginning, the uplink transmission rate is high and hence, ANS partitions at the input layer, sending the raw input image to the edge server. At the 150th frame, the transmission rate changes to be small, which makes pure on-device processing the optimal choice. Although ANS does not directly observe this network condition change, it quickly adapts its partition decision to be pure on-device processing in about 20 frames (at about the 170th frame). Later at the 390th frame, the network condition improves, and ANS is able to adapt to the new optimal partition point (i.e., fc1) in about 80 frames (at about the 470th frame). Finally at the 630th frame, the network condition becomes the best again, and after about 50 frames (at about the 680th frame), ANS changes to pure edge offloading again. 

Note that during the second phase when the network condition is bad, forced sampling is activated, which periodically tries a partition point other than pure on-device processing. As a result, when the network condition improves, ANS is able to detect this change and adapt to the new optimal partition. By contrast, LinUCB gets stuck in the pure on-device processing since the 170th frame, thereby losing its learning ability thereafter. 

\begin{figure}[t]
\subfigure[Changing uplink rate]{
\includegraphics[width=0.85\linewidth]{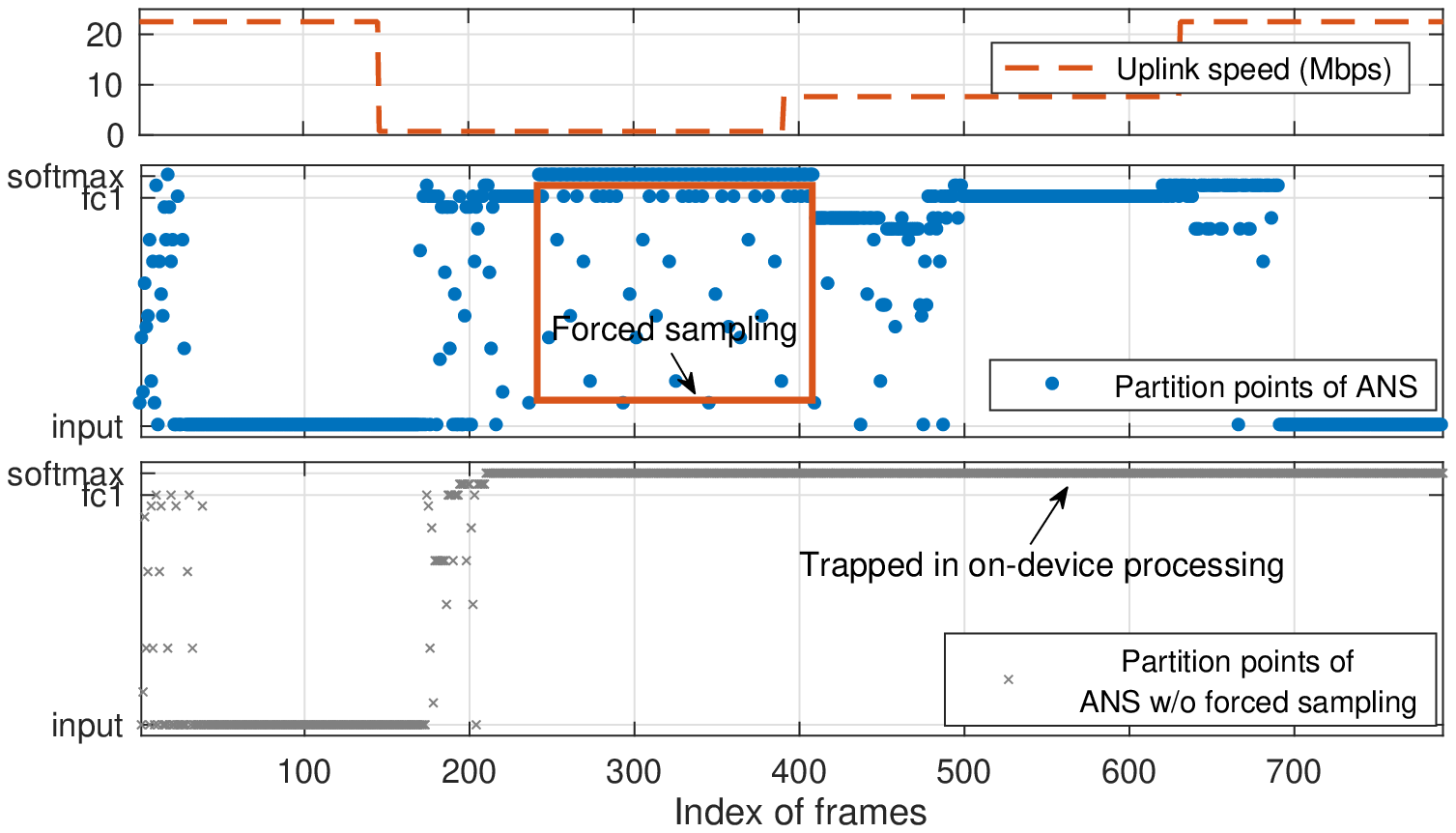}
\label{fig:dynamicvgg}}
\subfigure[Changing edge processing rate]{
\includegraphics[width=0.85\linewidth]{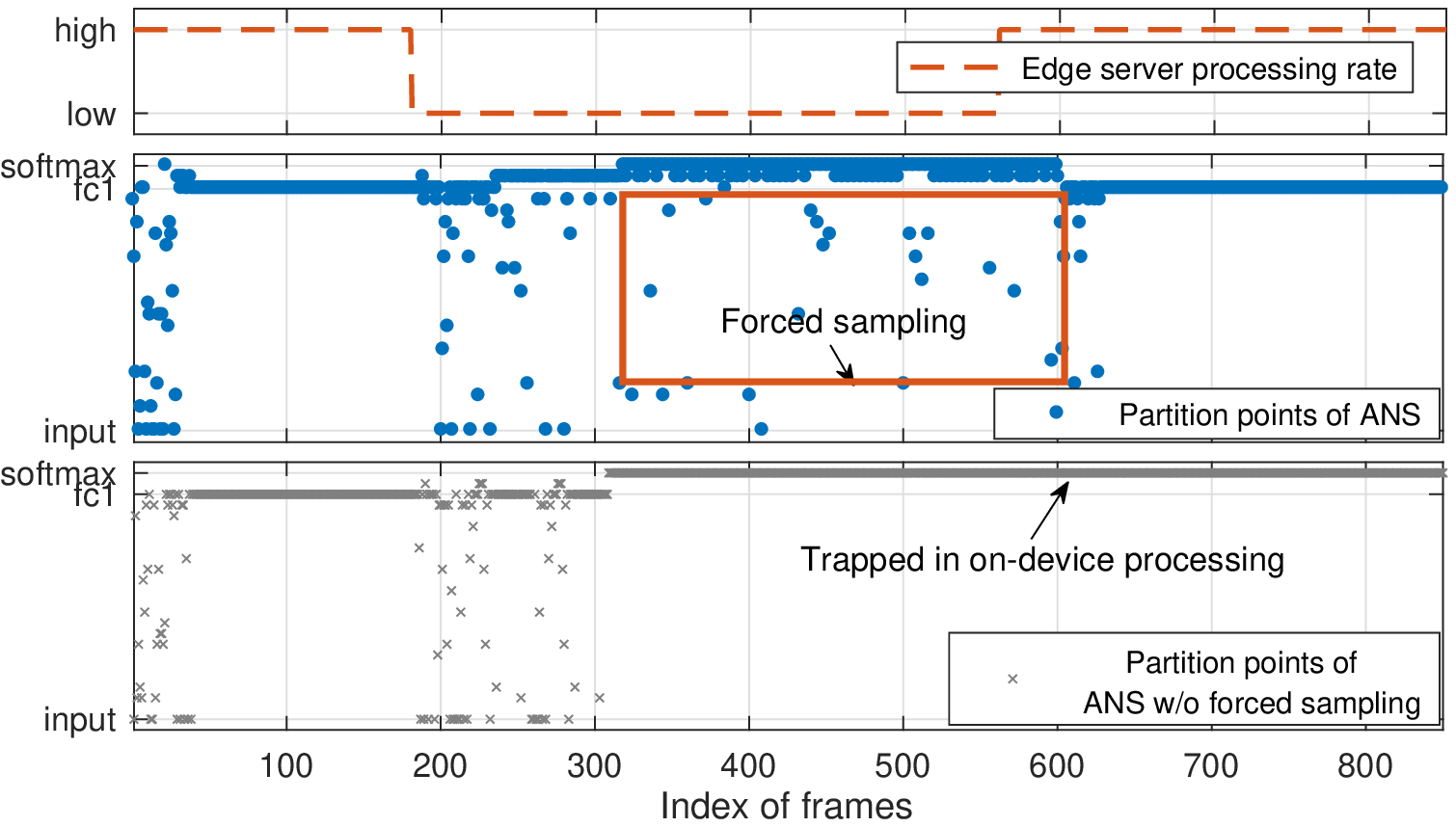}
\label{fig:dynamicEdge}}
\vspace{-10 pt}
\caption{ANS under changing environment.}
\vspace{-10 pt}
\end{figure}

Similar to Fig. \ref{fig:dynamicvgg}, Fig. \ref{fig:dynamicEdge} demonstrates the ability of ANS to adapt to the changing environment by varying the overall workload on the edge server over time. Similar observations can be made in this experiment. 

We further show how the frequency of environment change impacts ANS in Fig. \ref{fig:changeFrequency}. In the simulations, the wireless network is either in a fast (i.e., 50 Mbps) or a slow (i.e., 5 Mbps) state. For each frame, the network switches to the other state with probability $P_f$ and stays in the current state with probability $1-P_f$. As can been seen, ANS achieves excellent performance when the environment is relatively stable, namely when the switching probability is low. When the switching probability is high, ANS can become worse than MO due to the overhead incurred during learning. 

\begin{figure}[t]
\begin{minipage}[t]{0.46\linewidth}
\includegraphics[width=\textwidth]{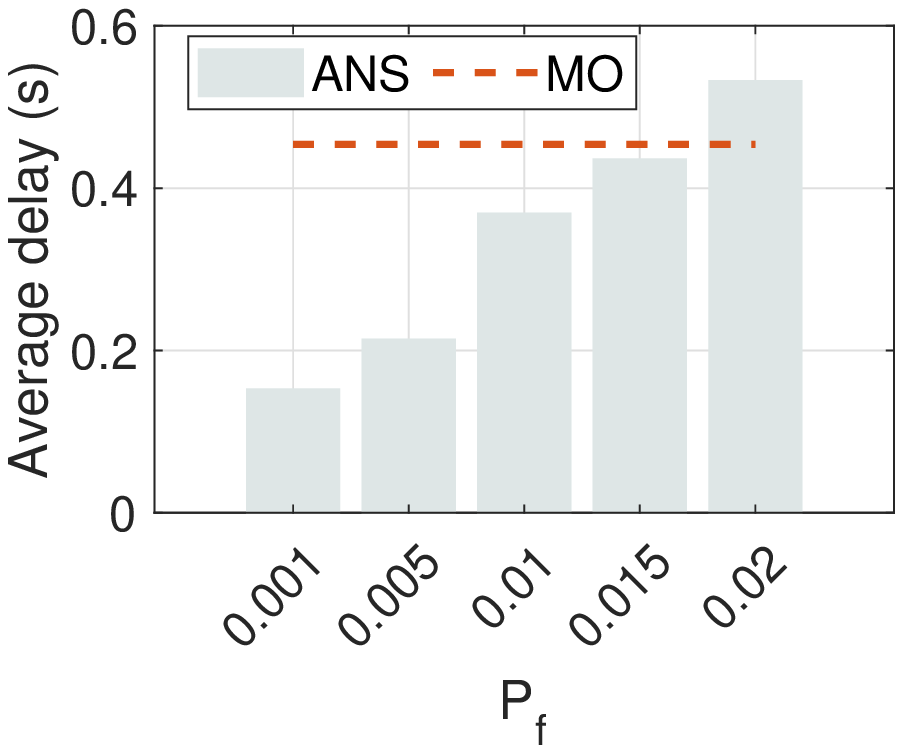}
\caption{Average inference delay at different environment change frequencies.}
\label{fig:changeFrequency}
\end{minipage}
\hspace{0.03\linewidth}
\begin{minipage}[t]{0.46\linewidth}
\includegraphics[width=\textwidth]{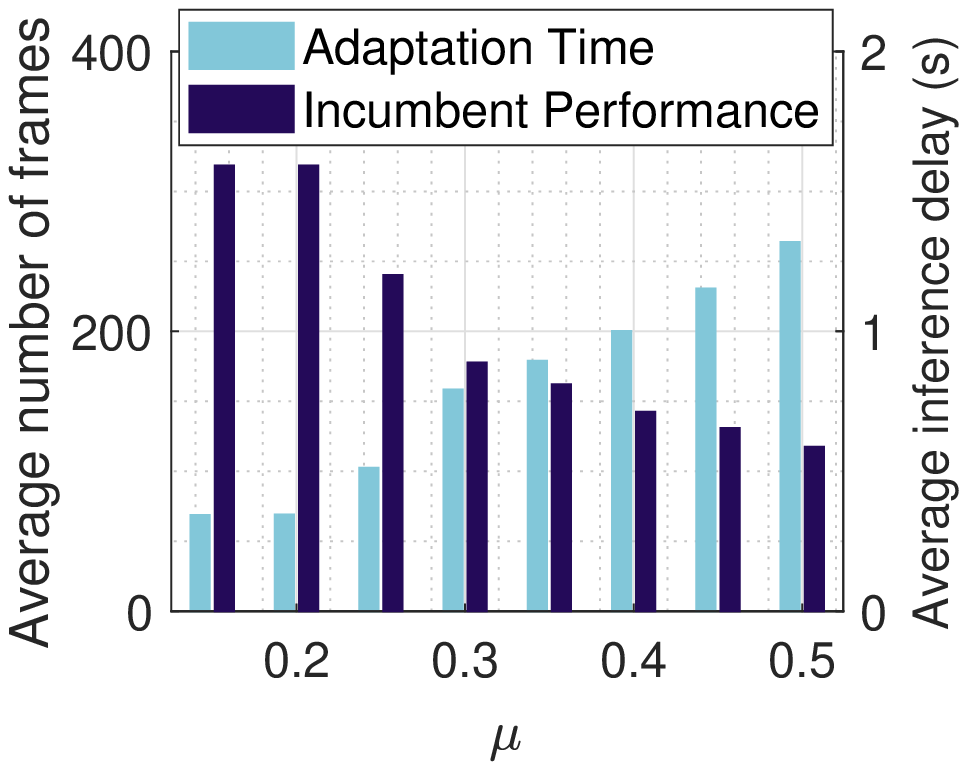}
\caption{Tradeoff of the forced sampling frequency. (larger $\mu$ means lower frequency.}
\label{fig:ForceImpact}
\end{minipage}
\vspace{-10 pt}
\end{figure}

\subsubsection*{\textbf{Forced Sampling Frequency}}
As shown in the last set of experiments, forced sampling is the key to continuing learning when pure on-device processing is selected for inference. In this experiment, we further investigate the impact of the forced sampling frequency on the learning performance. We design the experiment as follows: At frame $t_0$, the system starts with a bad network condition so that pure on-device processing is selected for inference. The system keeps in this bad network condition for a number frames, and then switches to a good network condition at frame $t_1$ where fc1 becomes the optimal DNN partition layer. At frame $t_2$, ANS stably selects the actual optimal layer fc1 as its partition decisions (i.e., for 20 consecutive frames). We vary the frequency of forced sampling to investigate its behavior with respect to the following two metrics: (1) \textbf{Adaptation Time}: Number of frames needed for ANS to detect the change and adapt its partition decision  i.e. $t_2 - t_1$. (2) \textbf{Incumbent Performance}: The end-to-end inference delay achieved by ANS during the period between $t_0$ and $t_1$. 

Clearly, there is an intuitive tradeoff between these two metrics. A higher forced sampling frequency allows ANS to quickly detect and adapt to the changes, thereby reducing the adaptation time. However, incumbent performance are harmed because of excessive unnecessary trials of suboptimal partition points. On the other hand, a lower forced sampling frequency minimizes the intervention to the incumbent optimal partition decision, namely on-device processing, but results in a longer time for ANS to detect changes due to less incoming new knowledge. The results in Fig. \ref{fig:ForceImpact} confirm this intuition.

\subsubsection*{\textbf{Key Frame Weights}}
\begin{figure}[t]
	\subfigure[]{
	    \label{fig:KeyFrameThreshold}
		\includegraphics[width=0.47\linewidth]{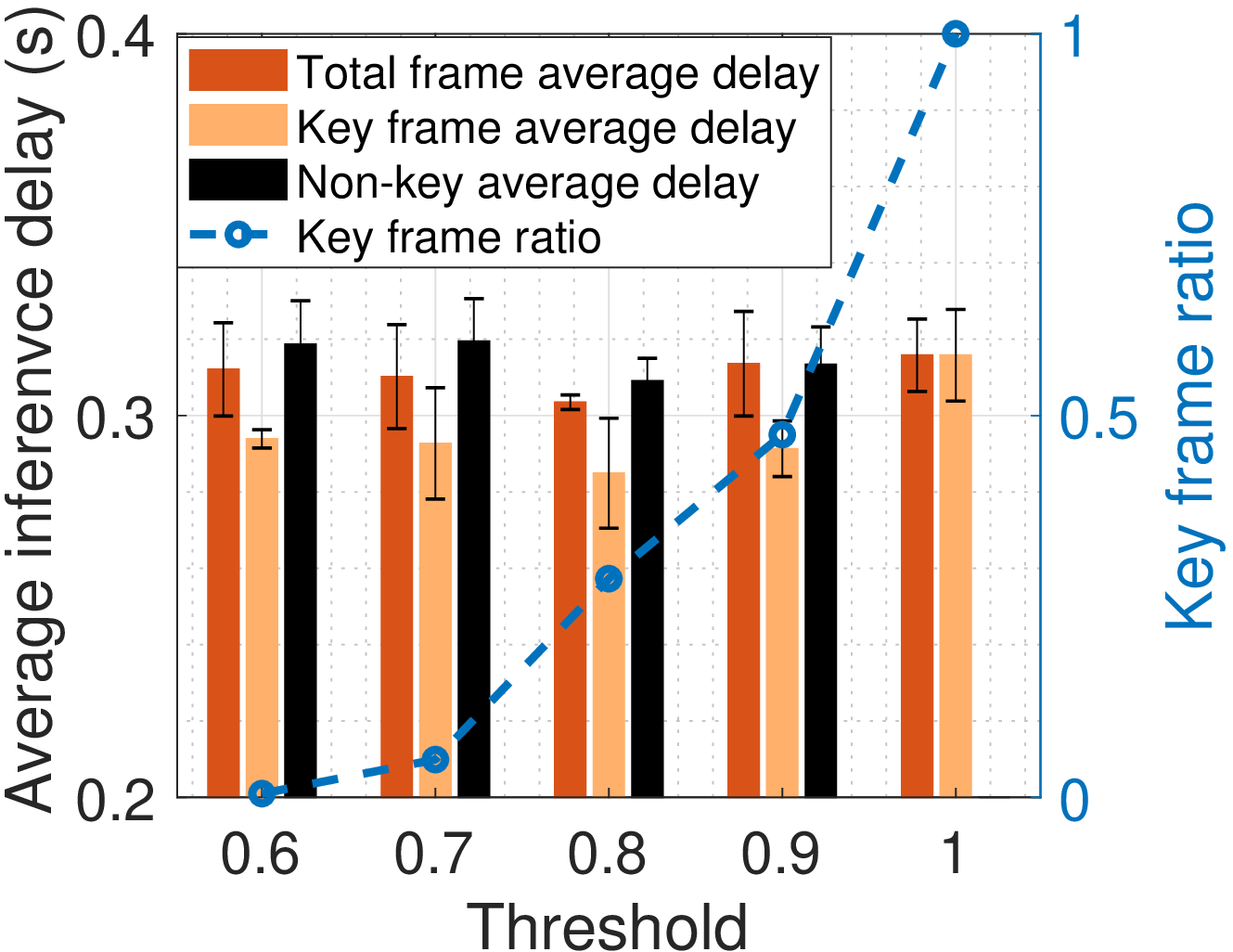}}
	\subfigure[]{
		\label{fig:ImpactweightL}
		\includegraphics[width=0.49\linewidth]{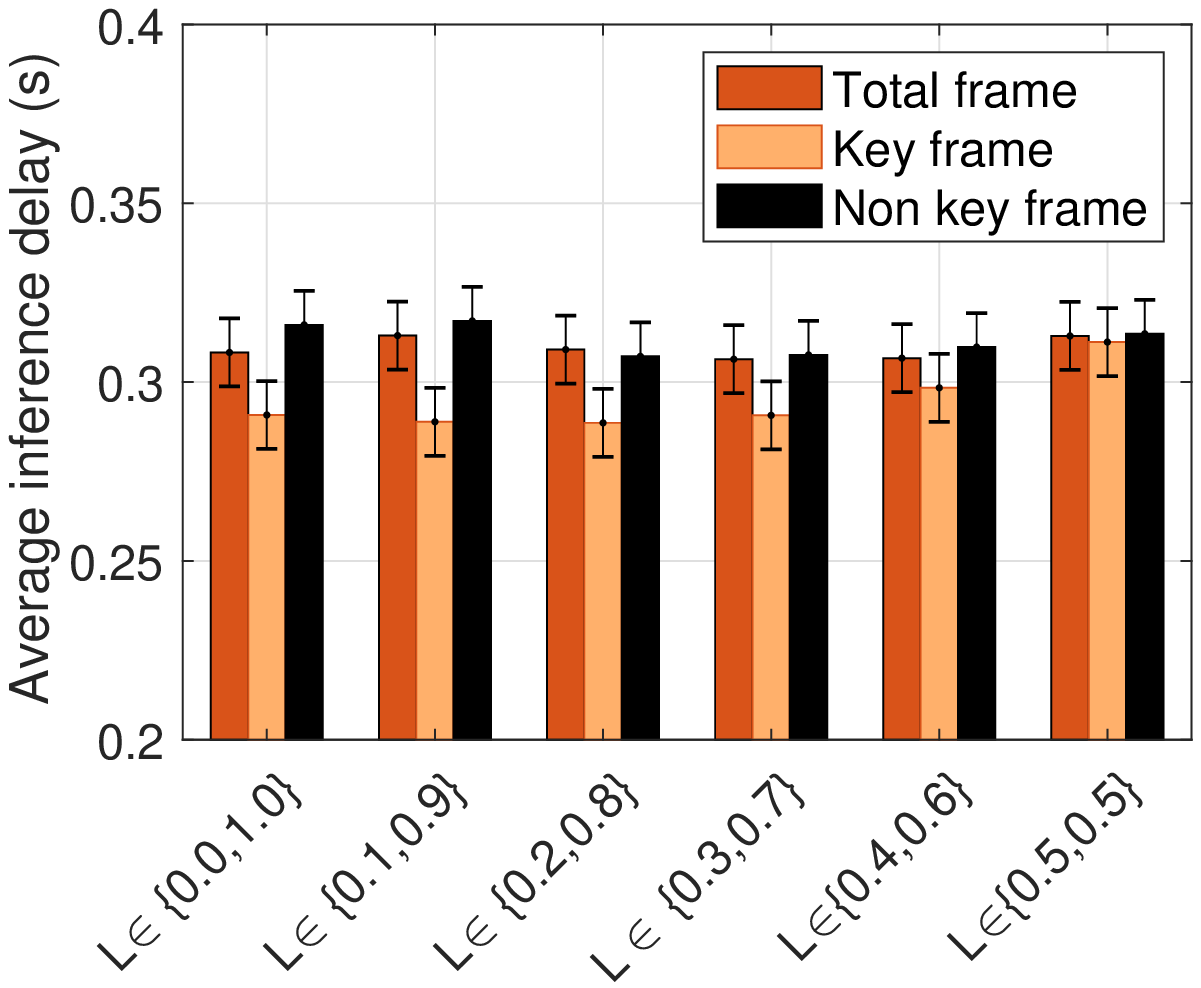}}
	\vspace{-10 pt}
	\caption{Inference delay of key frames: (a) Varying SSIM threshold for key frame detection. (b) Varying frame weights. }
	\vspace{-10 pt}
\end{figure}

In this experiment, we change the threshold adopted by the key frame detection algorithm SSIM to adjust the ratio of key frames, and investigate its impact on the end-to-end inference delay performance achieved by ANS. As shown in Fig. \ref{fig:KeyFrameThreshold}, for all detection thresholds, ANS is able to provide differentiated service to key and non-key frames, with the average delay of key frames considerably lower than that of the non-key frames. Note that when threshold is set to be 1, all frames are detected to be key-frames and hence, there are only two equal bars in the corresponding category. 

Furthermore, we change the relative weight $L_\text{key}/L_\text{non-key}$ assigned to key and non-key frames in ANS. As can be seen in Fig. \ref{fig:ImpactweightL}, when the relative weight becomes larger, the difference in the achieved end-to-end inference delay for key and non-key frames becomes more obvious. Therefore, by tuning the weights assigned to frames in ANS, we can provide desired differentiated services to different frames. 

\subsubsection*{\textbf{Working Together with Model Compression}}
\begin{figure}[tt]
\begin{minipage}[t]{0.46\linewidth}
\includegraphics[width=\textwidth]{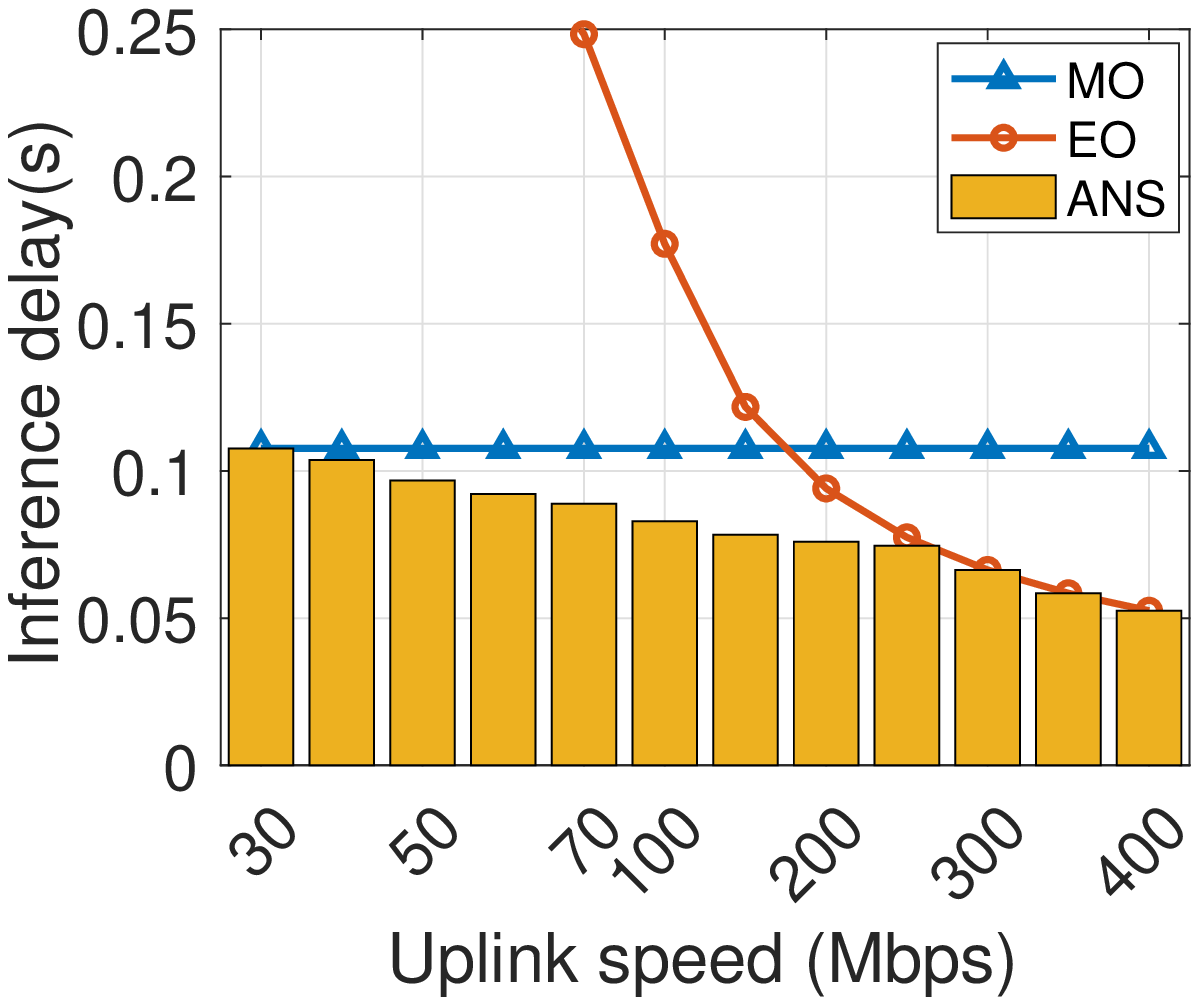}
\vspace{-10 pt}
\caption{ANS on compressed DNN YoLo-tiny.}
\label{fig:compressedNet}
\end{minipage}
\hspace{0.03\linewidth}
\begin{minipage}[t]{0.46\linewidth}
\includegraphics[width=\textwidth]{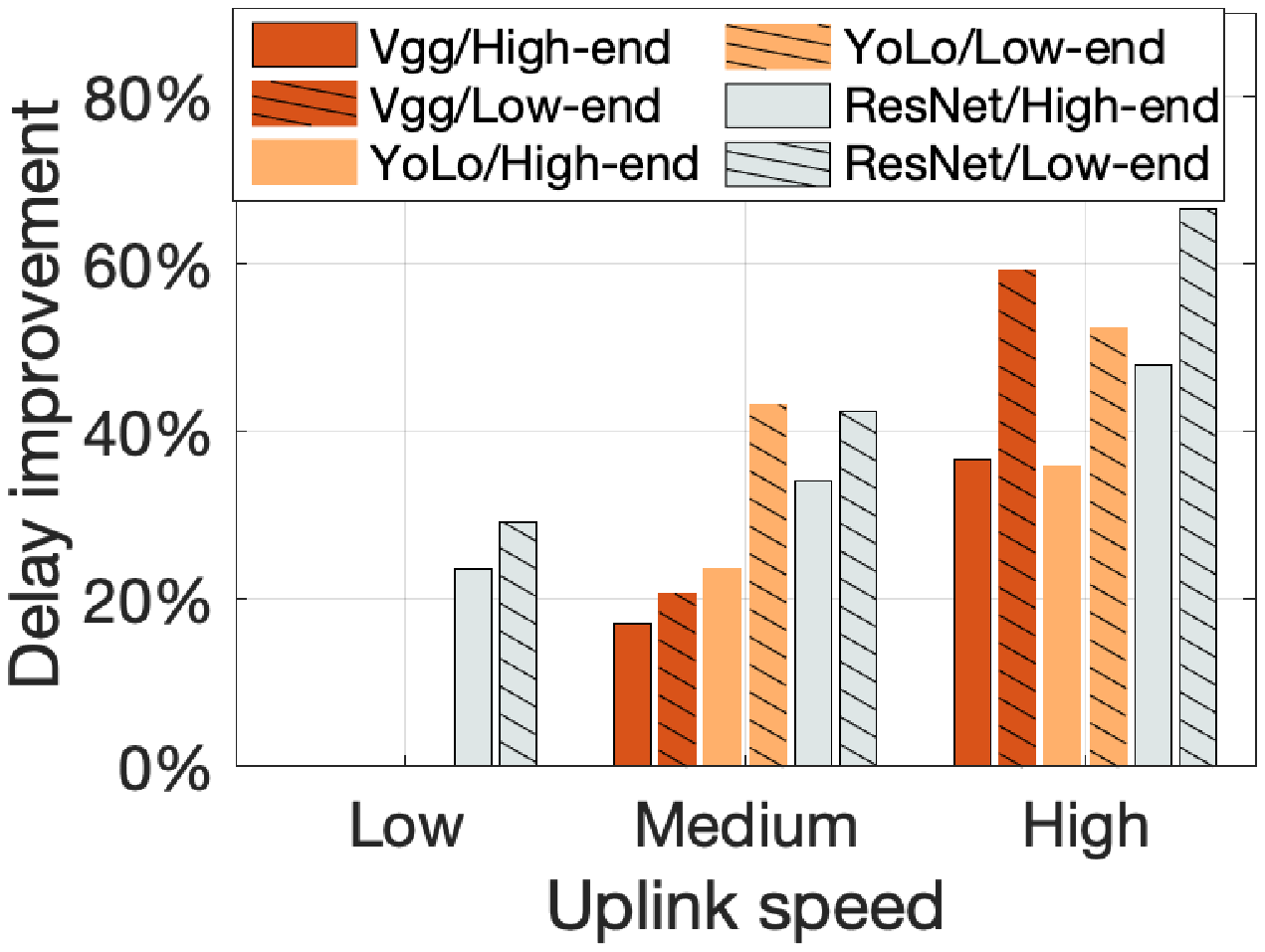}
\vspace{-10 pt}
\caption{ANS on different mobile devices.}
\label{fig:diffDevice}
\end{minipage}
\vspace{-10 pt}
\end{figure}

As highlighted in the Introduction, ANS does not aim to replace or compete with existing efforts such as DNN model compression. Rather, it is a natural complement to and enhancement of existing DNN model compression methods, which can work together without modification. To show this salient advantage of ANS, we design an experiment to let ANS work with a popular compressed DNN model for mobile devices, namely YoLo-tiny. The running time of YoLo-tiny is 7.76$\times$ less than the original uncompressed YoLo. Together with ANS, its inference delay is further reduced as shown in Fig. \ref{fig:compressedNet} for various uplink transmission rates, with the most significant gain obtained in the fast network regime (which is expected in the forthcoming mobile networks). This improvement is expected to be even larger on low-end mobile devices as will be shown in the next experiment. 

\subsubsection*{\textbf{Low-end Mobile Devices}}
As also mentioned in the Introduction, the current mobile device market is highly heterogeneous in terms of the devices' computing capability, with the vast majority being low-end and many years old. This is a real situation and obstacle that prevents DL-based mobile intelligence from being universally employed. In this experiment, we show that these low-end mobile devices can benefit even more from ANS, thereby enabling DL-based intelligence for a larger population. To this end, we configure the mobile device, namely Nvidia Jetson TX2, in two working modes using the \textit{nvpmodel} command. In the Max-N mode, the mobile device's GPU frequency is set as 1.30 GHz (referred to as the High-end mobile device); in the Max-Q mode, the mobile device's GPU frequency is set as 0.85 GHz (referred to as the Low-end mobile device), which is nearly halved compared to the first mode. We run ANS on these types of mobile devices for various DNN models in various network conditions, and report the delay reduction compared to MO (i.e. pure on-device processing) in Fig. \ref{fig:diffDevice}. As can been seen, the performance gain on low-end mobile devices is considerably higher than that of high-end mobile devices, especially with a fast network speed. Note that, when the network uplink speed is low, the delay reduction is 0 for Vgg16 and YoLo because ANS learns that the best decision is indeed to run all inference on the mobile device itself. In other cases, ANS learns a better partition point (possibly pure edge offloading) and thus improves the end-to-end inference delay performance.

\section{Related work}
\label{sec:RelatedWork}

\textbf{On-device Deep Learning Intelligence.} Deep learning for mobile and embedded devices has become a hot topic \cite{deng2019deep}, covering hardware architecture, computing platforms, and algorithmic optimization. Many CPU/GPU vendors are developing new processors to support tablets and smartphones to run DL empowered applications, a notable example being Apple Bionic chips \cite{sima2018apple}. New deep learning chips have already been developed on FPGAs \cite{guo2016model}. To support on-device inference, light versions of deep learning platforms have also emerged, e.g. TensorFlow Lite \cite{tfLite} and Caffe2 \cite{caffe2}. In addition, various algorithmic optimization techniques have been proposed.  For example, DNN compression methods \cite{xie2019source, zhang2018systematic} prune large-scale DNN models into computationally less demanding small DNNs that can be efficiently implemented on mobile devices; hardware acceleration methods \cite{nurvitadhi2017can, fowers2018configurable} are studied on the hardware level to optimize the utilization of hardware resources for accelerating DNN inference. However, these techniques are unlikely to address the immediate needs of all existing mobile devices, especially low-end and legacy devices that can not fully benefit from new computing architectures.  

\textbf{Multi-Access Edge Computing.} Multi-access edge computing \cite{taleb2017multi, shahzadi2017multi}, formally mobile edge computing, is an ETSI-defined network architecture concept that enables cloud computing capabilities and an IT service environment at the edge of the cellular network, and more in general at the edge of any network. By offloading data, running applications and performing related processing tasks closer to the end-users, network congestion is reduced and applications perform better. Some efforts have been made to migrate the DL related tasks to edge servers, e.g., \cite{zhou2018robust} builds a DL-based crowd sensing with edge computing. \cite{liu2017new} studies a DL-based food recognition system using edge computing, and \cite{liu2019edge} sets up an edge assisted real-time object detection for mobile augmented reality. By contrast, this paper does not simply migrate the DL service to the edge server but more importantly investigates an online learning-based DNN partitioning method to accelerate inference.

\textbf{Deep Neural Network Partition.} DNN partitioning technique is first proposed in the context of mobile cloud computing, where mobile device and cloud server work collaboratively to complete DNN inference tasks \cite{kang2017neurosurgeon,eshratifar2019jointdnn}. This technique is further extended to the edge computing scenario, e.g., \cite{li2019edge} studies a joint problem of DNN partition and early exit for edge computing systems and \cite{hu2019dynamic} proposes a partitioning scheme for DNNs with directed acyclic graph (DAG) topology. However, all these works require an offline profiling phase to measure the network condition, the processing ability of the mobile device, and the computing capacity of edge server. Given this information, these methods determine the optimal partition point by solving an optimization problem. However, the knowledge acquired during offline profiling can be easily outdated considering the highly dynamic environment in MEC systems and frequently updating the knowledge will incur large overhead. By contrast, our method uses online learning to learn the optimal partition on-the-fly.

\textbf{Contextual Bandit Learning.}
Multi-armed bandit (MAB) problem has been widely studied to address the critical tradeoff between exploration and exploitation in sequential decision making under uncertainty \cite{lai1985asymptotically}. Contextual bandit learning extends the basic MAB by considering the context-dependent reward functions to estimate the relation of the observed context and the uncertain environment, where LinUCB is a classic algorithm \cite{langford2007epoch}. 
AdaLinUCB \cite{guo2019adalinucb} builds on top of LinUCB to consider the heterogeneous importance of decision problems over time, which inspired our design for key frames. However, the special on-device processing decision causes a difficult challenge for
LinUCB and AdaLinUCB, forcing them to stop learning the first time when on-device processing is selected. Our algorithm $\mu$LinUCB incorporates a forced sampling technique to conquer this challenge, while still achieving provably asymptotically optimal performance.

\section{Conclusions} \label{sec:conclusion}
In this paper, we designed and prototyped a collaborative deep inference system to enable real-time object detection in video streams captured on mobile devices. We designed Autodidactic Neurosurgeon (ANS), an integral component of the system that online learns the optimal DNN partition points, using only limited delay feedback without a dedicated offline profiling/training phase. ANS explicitly considers varying importance of frames in video streams, and incorporates a simple yet effective forced sampling mechanism to ensure continued learning. As a result, ANS is able to closely follow the system changes and make adaptive decisions at a negligible computational cost. Experiments on our testbed show that ANS can significantly reduce the end-to-end inference delay compared to pure on-device processing or pure edge offloading through the synergy of both. Together with existing efforts on accelerating deep learning on resource constrained mobile devices such as DNN model compression, ANS will play an essential role in pushing the frontier of deep learning-empowered mobile intelligence, especially to the vast majority of low-end mobile devices. 

\bibliographystyle{IEEEtran}
\bibliography{reference}

\appendix
\section{Proof of Theorem \ref{theo:regertbound}} \label{sec:proofOfBound}
We analyze the performance of ANS by comparing it to an Oracle solution that knows precisely the ground-truth value of coefficients $\bm{\theta}^*$ and always picks the optimal partition point $p^*_t$ that minimizes the end-to-end inference delay for each frame $t$. The performance is measured in terms of the \emph{regret}, which is the gap of accumulated inference latency of all $T$ frames, $R = \sum_{t=1}^T d^\texttt{f}_{p_t} + {\bm{\theta}^*}^\top\x_{p_t} - d^\texttt{f}_{p^*_t} - {\bm{\theta}^*}^\top \x_{p^*_t}$. Before proving the main result \ref{theo:regertbound}, we first make some mild technical assumptions: (i) Noise $\eta$ satisfies the $C_\eta$-sub-Gaussian condition. (ii) The unknown parameter $\theta^*$ satisfies $\|\theta^*\|_2 \le C_{\theta}$. (iii) For $\forall p \in \mathcal{P}$, $\|x_p\|_2 \le C_{x}$ holds. (iv) The key frame weight $L_t$ satisfies $L_t \in \{L_\text{non-key},  L_\text{key}\}$, where $0 < L_\text{non-key} < L_\text{key} < 1$. (v) $\beta \ge \{1,  C_{\theta}^2\}$.

We classify the frames into three types: \textbf{Regular sampling sequence $\mathcal{R}$:} The frame is a normal frame and ANS selects an action in $\mathcal{P}$. \textbf{Non-sampling sequence $\mathcal{N}$:} The frame is a normal frame and ANS selects pure on-device processing $P$. \textbf{Forced sampling sequence $\mathcal{F}$:} The frame is a forced sampling frame and ANS selects an action in $\mathcal{P}_{\{\ne P\}}$. These frames are interspersed with each other as a result of ANS run. Let $\mathcal{T}_M = (t_1, \cdots, t_M)$ denote the subsequence of frames where each $t_m$ is a sampling frame (so ANS can observe $d^\texttt{e}_{p_t}$ and update the $\A_t$ and $\b_t$). Clearly, it must be $M \le T$. With abuse of notation, we use $\A_m$, $\b_m$ and $\bm{\theta}_m$ to denote the matrix, the vector and parameter estimation at the end of the $m$-th sampling frame thereafter.

\begin{lemma}
\label{pre_err}
(Prediction error bound) For any $\delta \in (0,1)$, with probability at least $1-\delta$, we have for all $p \in \mathcal{P}$ that
\begin{equation}
    |\hat{\bm{\theta}}_m^\top x_p - {\bm{\theta}^*}^\top x_p| \le \alpha \sqrt{(1-L_m)x^\top_p A_{m-1}^{-1} x_p}
\end{equation}
where $\alpha = \frac{C_\theta + C_\eta \sqrt{d\log\frac{1 + M C_x^2}{\delta}}}{1 - L_{key}}$.
\end{lemma}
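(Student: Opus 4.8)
The plan is to follow the standard confidence-ellipsoid analysis for ridge regression with adaptively (sequentially) chosen feature vectors, and to adapt it so that it carries the key-frame weight $(1-L_m)$. First I would make the bias--variance decomposition of the ridge estimator. Since $\A_{m-1} = \beta\I + \sum_{j=1}^{m-1}\x_{p_j}\x_{p_j}^\top$ and $\b_{m-1} = \sum_{j=1}^{m-1}\x_{p_j}d^{\texttt{e}}_{p_j}$ with $d^\texttt{e}_{p_j} = {\bm{\theta}^*}^\top\x_{p_j} + \eta_j$, substituting $\hat{\bm{\theta}}_m=\A_{m-1}^{-1}\b_{m-1}$ and using $\sum_{j}\x_{p_j}\x_{p_j}^\top = \A_{m-1}-\beta\I$ gives
\[
\hat{\bm{\theta}}_m - \bm{\theta}^* = -\beta\,\A_{m-1}^{-1}\bm{\theta}^* + \A_{m-1}^{-1}\S_{m-1}, \qquad \S_{m-1} := \sum_{j=1}^{m-1}\x_{p_j}\eta_j .
\]
Projecting onto a fixed $\x_p$ and applying Cauchy--Schwarz in the $\A_{m-1}^{-1}$-inner product to each term factors out the common quantity $\sqrt{\x_p^\top\A_{m-1}^{-1}\x_p}$, which is exactly the shape appearing in the claimed bound, leaving two scalar factors to control.

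Second, I would bound those two factors. The regularization (bias) term contributes $\beta\,\|\bm{\theta}^*\|_{\A_{m-1}^{-1}}$; since $\A_{m-1}\succeq\beta\I$ implies $\A_{m-1}^{-1}\preceq\beta^{-1}\I$, this is at most $\sqrt{\beta}\,\|\bm{\theta}^*\|_2 \le \sqrt{\beta}\,C_\theta$, and assumption (v) together with (ii) is what keeps this contribution controlled so it can be folded into the $C_\theta$ term of $\alpha$. The noise term contributes $\|\S_{m-1}\|_{\A_{m-1}^{-1}}$, and this is the crux. Because the $\x_{p_j}$ are chosen adaptively from past feedback, $\S_{m-1}$ is a martingale-type sum and I cannot treat its summands as independent. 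I would invoke a self-normalized concentration inequality of the Abbasi-Yadkori et al. type: with the filtration generated by the history up to each sampling round and the $C_\eta$-sub-Gaussian assumption (i) on $\eta$, with probability at least $1-\delta$, simultaneously for all $m$,
\[
\|\S_{m-1}\|_{\A_{m-1}^{-1}} \le C_\eta\sqrt{d\,\log\frac{1+M C_x^2}{\delta}},
\]
where the logarithmic term comes from bounding the determinant ratio $\det(\A_{m-1})/\det(\beta\I)$ using $\|\x_p\|_2\le C_x$ from assumption (iii) and $m-1\le M$. This step is the main obstacle, as it is the only place requiring a genuinely probabilistic (rather than deterministic linear-algebra) argument, and it is what supplies both the high-probability guarantee and the $\sqrt{d\log(\cdot)}$ scaling.

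Finally I would assemble the pieces. Adding the two bounds gives, on the single probability-$(1-\delta)$ event and for every $p\in\mathcal{P}$,
\[
|\hat{\bm{\theta}}_m^\top\x_p - {\bm{\theta}^*}^\top\x_p| \le \Big(C_\theta + C_\eta\sqrt{d\,\log\tfrac{1+M C_x^2}{\delta}}\Big)\sqrt{\x_p^\top\A_{m-1}^{-1}\x_p}.
\]
To recover the weighted form, I rewrite $\sqrt{\x_p^\top\A_{m-1}^{-1}\x_p} = \sqrt{(1-L_m)\,\x_p^\top\A_{m-1}^{-1}\x_p}\big/\sqrt{1-L_m}$ and use assumption (iv), namely $L_m\le L_{key}<1$, so that $1/\sqrt{1-L_m}\le 1/\sqrt{1-L_{key}}\le 1/(1-L_{key})$. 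This absorbs the weight into the denominator of $\alpha$ and yields the stated inequality. The one bookkeeping point to watch is that the bound must hold uniformly over all $p$ and all rounds $m$ on one event: uniformity over $p$ is free because the per-term bounds are pointwise in $\x_p$, and uniformity over $m$ is delivered by the time-uniform form of the self-normalized inequality, which is precisely why I would use the martingale version rather than a fixed-time Hoeffding bound plus a union bound.
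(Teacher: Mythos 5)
Your proposal is correct and follows essentially the same route as the paper: Cauchy--Schwarz in the $\A_{m-1}^{-1}$-weighted inner product to factor out $\sqrt{\x_p^\top\A_{m-1}^{-1}\x_p}$, the Abbasi-Yadkori self-normalized confidence-ellipsoid bound for the radius $C_\theta + C_\eta\sqrt{d\log\frac{1+MC_x^2}{\delta}}$ (which the paper invokes as a black-box citation in its Lemma~\ref{sub_guassian} while you unpack it into the ridge bias term plus the martingale term), and finally the absorption of the weight via $1/\sqrt{1-L_m}\le 1/(1-L_{key})$. Your handling of that last step is in fact slightly cleaner than the paper's, which writes the $1/(1-L_m)$ rescaling as an equality where it is really an inequality.
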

\begin{proof}
{\allowdisplaybreaks
\begin{align*}
&|\hat{\bm{\theta}}_m^\top x_p - {\bm{\theta}^*}^\top x_p| = |(\hat{\bm{\theta}}_m^\top - {\bm{\theta}^*}^\top) x_p| = |(\hat{\bm{\theta}}_m^\top - {\bm{\theta}^*}^\top) \A^\frac{1}{2}_{m-1} \A^{-\frac{1}{2}}_{m-1} x_p| \notag\\
& = |(\hat{\bm{\theta}}_m^\top - {\bm{\theta}^*}^\top) \A^\frac{1}{2}_{m-1} x_p \A^{-\frac{1}{2}}_{m-1}| \le \|(\hat{\bm{\theta}}_m^\top - {\bm{\theta}^*}^\top) \A^\frac{1}{2}_{m-1}\|_2 \|x_p \A^{-\frac{1}{2}}_{m-1}\|_2 \notag\\
& = \sqrt{(\hat{\bm{\theta}}_m - \bm{\theta}^*)^\top \A^{\frac{1}{2}}_{m-1} \A^{\frac{1}{2}}_{m-1}(\hat{\bm{\theta}}_m - {\bm{\theta}^*})} \cdot \sqrt{x_p^\top \A^{-\frac{1}{2}}_{t-1} \A^{-\frac{1}{2}}_{t-1} x_p} \notag\\
& = \|\hat{\bm{\theta}}_m - \bm{\theta}^*\|_{\A_{m-1}} \cdot \sqrt{x_p^\top \A^{-1}_{m-1}x_p} \notag\\
& \le \left(C_\theta + C_\eta \sqrt{d\log\frac{1 + M C_x^2}{\delta}}\right) \cdot \sqrt{x_p^\top \A^{-1}_{m-1} x_p} \notag\\
& = \left(\frac{C_\theta + C_\eta \sqrt{d\log\frac{1 + M C_x^2}{\delta}}}{1 - L_m}\right) \cdot \sqrt{(1 - L_m)x_p^\top \A^{-1}_{m-1}x_p} \notag\\
& \le \left(\frac{C_\theta + C_\eta \sqrt{d\log\frac{1 + M C_x^2}{\delta}}}{1 - L_{key}}\right) \cdot \sqrt{(1 - L_m)x_p^\top \A^{-1}_{m-1}x_p}
\end{align*}
}
where the second equality holds by noting that $\A_{t-1}$ is a symmetric positive-definite matrix. The first inequality holds by the Cauchy-Schwarz inequality. The second inequality holds by Lemma \ref{sub_guassian} below. Thus, we set $\alpha = \frac{C_\theta + C_\eta \sqrt{d\log\frac{1 + M C_x^2}{\delta}}}{1 - L_{key}}$ and complete the proof.
\end{proof}

\begin{lemma}
\label{sub_guassian}
When $|\eta| \le C_\eta$, $\|\bm{\theta}^*\|_2 \le C_{\theta}$, $\|x_p\|_2 \le C_{x}$, for all $\delta \in (0,1)$, with probability at least $1-\delta$, we have
\begin{equation}
  \|\hat{\bm{\theta}}_m - \bm{\theta}^*\|_{\A_{m-1}} \le  C_\theta + C_\eta \sqrt{d\log\frac{1 + M C_x^2}{\delta}}  \notag
\end{equation}
where $d$ is the dimension of the context. 
\end{lemma}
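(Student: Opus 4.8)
The plan is to treat $\hat{\bm{\theta}}_m$ as the regularized least-squares (ridge) estimate built from the $m-1$ sampling frames observed so far, and to control its deviation from $\bm{\theta}^*$ directly in the data-dependent $\A_{m-1}$-norm. Writing $x_s$ for the context $x_{p_{t_s}}$ selected at the $s$-th sampling frame and using the generative model $d^\texttt{e}_{s} = {\bm{\theta}^*}^\top x_s + \eta_s$, I would first substitute $\b_{m-1} = \sum_{s=1}^{m-1} x_s d^\texttt{e}_s$ into $\hat{\bm{\theta}}_m = \A_{m-1}^{-1}\b_{m-1}$ and use $\A_{m-1} = \beta I_d + \sum_{s=1}^{m-1} x_s x_s^\top$ to obtain the closed form of the error,
\begin{equation*}
\hat{\bm{\theta}}_m - \bm{\theta}^* \;=\; -\beta\,\A_{m-1}^{-1}\bm{\theta}^* \;+\; \A_{m-1}^{-1}\sum_{s=1}^{m-1} x_s\,\eta_s .
\end{equation*}
Applying the triangle inequality in the $\A_{m-1}$-norm then splits the task into a deterministic bias (regularization) term and a stochastic noise term, which I would bound separately.

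For the bias term I would use $\|\A_{m-1}^{-1}\bm{\theta}^*\|_{\A_{m-1}} = \sqrt{{\bm{\theta}^*}^\top \A_{m-1}^{-1}\bm{\theta}^*}$ together with the monotonicity fact $\A_{m-1}\succeq\beta I_d$ (hence $\A_{m-1}^{-1}\preceq\beta^{-1}I_d$) and $\|\bm{\theta}^*\|_2\le C_\theta$, which bounds this contribution at the order of the $C_\theta$ term in the statement; assumption (v), $\beta\ge\max\{1,C_\theta^2\}$, is what both governs the size of this regularization bias and makes $\beta\ge 1$ available for the determinant estimate below. This step is pure algebra and eigenvalue monotonicity, so I expect no difficulty here.

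The main work, and the main obstacle, is the noise term $\big\|\A_{m-1}^{-1}\sum_{s=1}^{m-1}x_s\eta_s\big\|_{\A_{m-1}} = \big\|\sum_{s=1}^{m-1}x_s\eta_s\big\|_{\A_{m-1}^{-1}}$. The delicate point is that the contexts $x_s$ are chosen \emph{adaptively} (the partition point at each sampling frame depends on the current estimate and on past feedback), so the $x_s$ are not fixed and the sum is a vector-valued martingale rather than a sum of independent terms; an ordinary concentration argument that conditions on the design fails. I would therefore invoke a self-normalized concentration inequality for such martingales (the method-of-mixtures bound of Abbasi-Yadkori et al.): since $|\eta|\le C_\eta$ makes each $\eta_s$ conditionally $C_\eta$-sub-Gaussian given the history, with probability at least $1-\delta$,
\begin{equation*}
\Big\|\textstyle\sum_{s=1}^{m-1}x_s\,\eta_s\Big\|_{\A_{m-1}^{-1}}^2 \;\le\; 2C_\eta^2\,\log\frac{\det(\A_{m-1})^{1/2}\,\det(\beta I_d)^{-1/2}}{\delta}.
\end{equation*}
To convert the determinant ratio into the stated form I would bound $\det(\A_{m-1})$ by the AM--GM (trace) inequality, $\det(\A_{m-1})\le(\mathrm{tr}(\A_{m-1})/d)^d\le(\beta+(m-1)C_x^2/d)^d$, using $\|x_s\|_2\le C_x$ and $m-1\le M$; dividing by $\det(\beta I_d)=\beta^d$ and using $\beta\ge1$, $d\ge1$ collapses the log-determinant into $d\log(1+MC_x^2)$, and absorbing the $\log(1/\delta)$ term yields the noise bound $C_\eta\sqrt{d\log\frac{1+MC_x^2}{\delta}}$. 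Summing the bias and noise contributions then gives the claimed inequality. Everything outside the self-normalized step is routine, so I expect the adaptive-context martingale concentration to be the one place where genuine care is required.
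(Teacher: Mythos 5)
Your proposal is correct and is essentially the paper's own proof spelled out in full: the paper simply defers to Theorem 2 of Abbasi-Yadkori et al.\ (its cited reference), whose proof is exactly your decomposition into a regularization-bias term plus a self-normalized martingale noise term, combined with the determinant--trace bound $\det(\A_{m-1})\le(\beta+(m-1)C_x^2/d)^d$. The only quibble --- one you share with the paper's statement itself --- is that the bias term actually comes out as $\sqrt{\beta}\,C_\theta$ rather than $C_\theta$, which matches the displayed bound only when $\beta=1$; this affects constants, not the order of the regret.
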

\begin{proof}
The proof follows the fact that $\hat{\bm{\theta}}_t$ is the result of a ridge regression using data samples
collected in the sampling time slots , assuming the sub-Gaussian condition
for noise. For a complete proof, see Theorem 2 in \cite{abbasi2011improved}.
\end{proof}

\begin{lemma}
\label{oneStepRegret}
(One-step regret) $\forall m \ge 0$ Let $\alpha = \frac{C_\theta + C_\eta \sqrt{d\log\frac{1 + M C_x^2}{\delta}}}{1 - L_{key}}$, the one-step regret satisfies
\begin{align}
    & R_t \le 2 \alpha \sqrt{x^\top_p \A_m^{-1} x_p}, ~~if ~ t \in \mathcal{R} \\
    & R_t \le 3 \alpha \sqrt{x^\top_p \A_m^{-1} x_p}, ~~if ~ t \in \mathcal{N}
\end{align}
\end{lemma}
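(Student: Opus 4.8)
The plan is to follow the standard ``optimism in the face of uncertainty'' recipe, specialized to the two frame types, with the prediction-error bound of Lemma \ref{pre_err} as the single source of concentration. Write the one-step regret at a sampling frame (indexed by $m$) as $R_t = (d^\texttt{f}_{p_t} + {\bm{\theta}^*}^\top x_{p_t}) - (d^\texttt{f}_{p^*_t} + {\bm{\theta}^*}^\top x_{p^*_t})$, and let $\hat{d}^\texttt{e}_p = \hat{\bm{\theta}}_m^\top x_p - \alpha\sqrt{(1-L_t)x_p^\top \A_{m-1}^{-1} x_p}$ denote the optimistic (lower-confidence) edge-delay index that $\mu$LinUCB minimizes; I write the radius at the decision-time matrix $\A_{m-1}$ of Lemma \ref{pre_err}, which is the $\A_m$ of the statement under the convention that the subscript counts sampling frames. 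First I would extract from Lemma \ref{pre_err} its two immediate consequences, valid on the event of probability $\ge 1-\delta$: (a) \emph{optimism}, $\hat{d}^\texttt{e}_p \le {\bm{\theta}^*}^\top x_p$ for every $p$, and (b) a \emph{width bound}, ${\bm{\theta}^*}^\top x_p - \hat{d}^\texttt{e}_p \le 2\alpha\sqrt{(1-L_t)x_p^\top\A_{m-1}^{-1}x_p}$. Both follow by placing the absolute-value bound of Lemma \ref{pre_err} around $\hat{\bm{\theta}}_m^\top x_p$ and using $(1-L_t)\le 1$.

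For the regular-sampling case $t\in\mathcal{R}$ I would chain optimism with the greedy rule. Since $p_t$ minimizes $d^\texttt{f}_p + \hat{d}^\texttt{e}_p$ over all of $\mathcal{P}$ (including $p^*_t$), we get $d^\texttt{f}_{p_t}+\hat{d}^\texttt{e}_{p_t} \le d^\texttt{f}_{p^*_t}+\hat{d}^\texttt{e}_{p^*_t} \le d^\texttt{f}_{p^*_t}+{\bm{\theta}^*}^\top x_{p^*_t}$, where the last step is optimism (a). Substituting this lower bound for the optimal score into $R_t$ cancels the $p^*_t$ terms and leaves $R_t \le {\bm{\theta}^*}^\top x_{p_t}-\hat{d}^\texttt{e}_{p_t}$, which the width bound (b) controls by $2\alpha\sqrt{(1-L_t)x_{p_t}^\top\A_{m-1}^{-1}x_{p_t}} \le 2\alpha\sqrt{x_{p_t}^\top\A_{m-1}^{-1}x_{p_t}}$. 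This yields the factor-$2$ bound; crucially it is charged to the confidence width of the \emph{chosen} arm $p_t$, which is exactly what the later elliptical-potential summation needs.

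For the non-sampling case $t\in\mathcal{N}$ the chosen arm is $p_t=P$ with $x_P=\bm{0}$, so ${\bm{\theta}^*}^\top x_P = 0$ and $\hat{d}^\texttt{e}_P=0$. (If $p^*_t=P$ then $R_t=0$ directly, so take $p^*_t\ne P$.) The greedy rule gives $d^\texttt{f}_P \le d^\texttt{f}_{p^*_t}+\hat{d}^\texttt{e}_{p^*_t}$, hence $R_t = d^\texttt{f}_P - d^\texttt{f}_{p^*_t} - {\bm{\theta}^*}^\top x_{p^*_t} \le \hat{d}^\texttt{e}_{p^*_t} - {\bm{\theta}^*}^\top x_{p^*_t}$. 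The essential difference from the regular case is that the chosen arm's width vanishes, so the regret must be charged to the \emph{optimal} arm $p^*_t$ instead. I would then bound $\hat{d}^\texttt{e}_{p^*_t}-{\bm{\theta}^*}^\top x_{p^*_t}$ via Lemma \ref{pre_err} applied at $p^*_t$; the extra slack over the factor-$2$ case comes from reconciling the radius actually used at this (non-sampling) frame---built from the stale estimate $\hat{\bm{\theta}}_m$, the un-updated matrix $\A_{m-1}$, and the current weight $L_t$---with the concentration guarantee certified at the last sampling frame (whose weight is $L_m$). Carrying $L_t$ and $L_m$ separately through $\sqrt{1-L_m}\le 1$ and the subtracted $\alpha\sqrt{(1-L_t)x_{p^*_t}^\top\A_{m-1}^{-1}x_{p^*_t}}$ term, and bounding each confidence radius by $\alpha\sqrt{x_{p^*_t}^\top\A_{m-1}^{-1}x_{p^*_t}}$, delivers a bound $c\,\alpha\sqrt{x_{p^*_t}^\top\A_{m-1}^{-1}x_{p^*_t}}$ with constant $c$ no larger than the claimed $3$.

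The main obstacle is precisely this non-sampling case: regret cannot be attributed to the arm actually played, because its feature---and thus its width---is zero, forcing the charge onto the \emph{unsampled} optimal arm. This is harmless for the per-frame statement proved here, but it is exactly the term that the elliptical-potential lemma cannot telescope later (since $\A$ is never updated on non-sampling frames), which is what ultimately motivates forced sampling and produces the $O(T^{1-\mu})$ branch of the regret. The only genuinely delicate bookkeeping within this lemma is keeping the current weight $L_t$ distinct from the weight $L_m$ at which Lemma \ref{pre_err} certifies concentration, and checking that assumption (v), $\beta\ge\max\{1,C_\theta^2\}$, together with $L_m\le L_{key}$ keeps all radii well-defined and every $(1-L)$ factor bounded by one; everything else is routine substitution.
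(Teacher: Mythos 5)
Your argument is correct for both cases, and the $\mathcal{R}$ half is essentially the paper's proof: your single optimism-plus-greedy chain, charged to the width of the chosen arm, unifies what the paper splits into its Cases 1 ($p^*\ne P$) and 4 ($p^*=P$). The $\mathcal{N}$ half, however, takes a genuinely different route. The paper does not charge the regret to $p^*$: it introduces an auxiliary arm $\hat p$ (the index-minimizing member of $\mathcal{P}_{\{\ne P\}}$), decomposes $R_t$ into a Case-1-type regret of $\hat p$ against $p^*$, a prediction error at $\hat p$, and the selection inequality $d^\texttt{f}_P \le d^\texttt{f}_{\hat p}+\hat{\bm{\theta}}_m^\top x_{\hat p}-\alpha\sqrt{(1-L_t)x_{\hat p}^\top \A_{m-1}^{-1}x_{\hat p}}$, arriving at $3\alpha\sqrt{x_{\hat p}^\top \A_{m-1}^{-1}x_{\hat p}}$. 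You instead chain the greedy rule with optimism at $p^*$ and land on $R_t\le \hat d^{\texttt{e}}_{p^*}-{\bm{\theta}^*}^\top x_{p^*}$; if you push this one step further you will find that on the concentration event this quantity is $\le 0$ (the lower-confidence index of $p^*$ sits below its true cost), so your constant is not merely ``no larger than $3$'' --- the $\mathcal{N}$ bound is satisfied vacuously. Both proofs establish the lemma as written, since the statement leaves the arm $p$ inside $\sqrt{x_p^\top\A_m^{-1}x_p}$ unspecified, and your $L_t$-versus-$L_m$ bookkeeping does go through because $\alpha$ carries the $1-L_{\text{key}}$ denominator. The one substantive difference is downstream: in the $R_{\mathcal{N}}$ summation the paper needs the width of an arm that forced sampling will actually pull (so the elliptical-potential lemma can shrink it), and $\hat p$ is precisely that arm, whereas $x_{p^*}$ need never be sampled and its width need not decrease. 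So if your version of the lemma is to plug into the paper's proof of Theorem 1, you should either keep the $\hat p$-charged form or note explicitly that your sharper conclusion makes the $\mathcal{N}$ contribution vanish on the good event. Your closing observation --- that the zero feature of $p=P$ forces the charge onto an unsampled arm and is exactly what motivates forced sampling and the $O(T^{1-\mu})$ branch --- is the right diagnosis.
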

\begin{proof}
To prove the one-step regret, we note that when ANS chooses pure on-device processing ($p=P$), the latency of processing is $d^\texttt{f}_{P}$. Thus, we consider four cases in our algorithm and discuss the one-step regret.

(1) The optimal action is $p^* \in \{0, 1, \cdots, P-1\}$ and our algorithm selects action $p \in \{0, 1, \cdots, P-1\}$ (namely $t \in \mathcal{R}$). In this case, the one-step regret is
{\allowdisplaybreaks
\begin{align*}
& R_t = d^\texttt{f}_{p} + \bm{\theta}^{*\top} x_p - d^\texttt{f}_{p^*} - \bm{\theta}^{*\top} x_{p^*}  \notag \\
& \le d^\texttt{f}_{p} + \bm{\theta}^{*\top} x_p - d^\texttt{f}_{p^*} - \hat{\bm{\theta}}_m^\top x_{p^*} + \alpha \sqrt{(1-L_m) x^\top_{p^*} \A_{m-1}^{-1} x_{p^*}} \notag \\
& = d^\texttt{f}_{p} + \bm{\theta}^{*\top} x_p - [d^\texttt{f}_{p^*} + \hat{\bm{\theta}}_m^\top x_{p^*} - \alpha \sqrt{(1-L_m) x^\top_{p^*} \A_{m-1}^{-1} x_{p^*}}] \notag \\
& \le d^\texttt{f}_{p} + \bm{\theta}^{*\top} x_p - [d^\texttt{f}_p + \hat{\bm{\theta}}_m^\top x_p - \alpha \sqrt{(1-L_m) x^\top_p \A_{m-1}^{-1} x_p}]  \notag \\ 
& = \bm{\theta}^{*\top} x_p - \hat{\bm{\theta}}_m^\top x_p + \alpha \sqrt{(1- L_m)x^\top_p \A_{m-1}^{-1} x_p} \notag \\
& \le 2 \alpha \sqrt{(1- L_m)x^\top_p \A_{m-1}^{-1} x_p} \le 2 \alpha \sqrt{x^\top_p \A_{m-1}^{-1} x_p}
\end{align*}
}
where the inequalities in the second and sixth lines hold by Theorem \ref{pre_err}. The inequality in the fourth line holds by the design of our algorithm, especially by line 13.

(2) The optimal action is $p^* = P$ and our algorithm selects action $p = P$ (namely $t \in \mathcal{N}$). Thus, we have $R_t = 0$, because in this case $\mathbb{E}(d^\texttt{f}_{p^*} - d^\texttt{f}_{p}) = 0$.

(3) The optimal action is $p^* \in \{0, 1, \cdots, P-1\}$ and our algorithm selects action $p = P$ (namely $t \in \mathcal{N}$). We firstly introduce an auxiliary action $\hat{p} \in \{0, 1, \cdots, P-1\}$. Thus,
{\allowdisplaybreaks
\begin{align*}
& R_t = d^\texttt{f}_{P} - d^\texttt{f}_{p^*} - \bm{\theta}^{*\top} x_{p^*} \notag \\
& = d^\texttt{f}_{P} + d^\texttt{f}_{\hat{p}} + \bm{\theta}^{*\top} x_{\hat{p}} - d^\texttt{f}_{p^*} - \bm{\theta}^{*\top} x_{p^*} + \hat{\bm{\theta}}^\top_m x_{\hat{p}}  -  \bm{\theta}^{*\top} x_{\hat{p}} - d^\texttt{f}_{\hat{p}} - \hat{\bm{\theta}}_m x_{\hat{p}} \notag \\
& \le d^\texttt{f}_{P}  - d^\texttt{f}_{\hat{p}} - \hat{\bm{\theta}}_m x_{\hat{p}} + 3\alpha \sqrt{x^\top_{\hat{p}} A_{m-1}^{-1} x_{\hat{p}}} \le 3\alpha \sqrt{x_{\hat{p}}^\top \A_{m-1}^{-1} x_{\hat{p}}}
\end{align*}
}
where the inequality in the third line holds by the Lemma \ref{pre_err} and Case 1. The last inequality holds because $d^\texttt{f}_{P}  - d^\texttt{f}_{\hat{p}} - \hat{\bm{\theta}}_m x_{\hat{p}} \le 0$ in this case.

(4) The optimal action is $p^* = P$ and our algorithm selects action $p \in \{0, 1, \cdots, P-1\}$ (namely $t \in \mathcal{R}$). Thus,
\begin{equation}
\begin{aligned}
& R_t = d^\texttt{f}_{p} + \bm{\theta}^{*\top} x_p - d^\texttt{f}_{P} = (\bm{\theta}^{*\top} x_p - \hat{\bm{\theta}}_m^\top x_p) + (\hat{\bm{\theta}}_m^\top x_p + d^\texttt{f}_{p} - d^\texttt{f}_{P}) \notag \\
& \le 2 \alpha \sqrt{x^\top_p \A_{m-1}^{-1} x_p}
\end{aligned}
\end{equation}
where the inequality holds by Lemma \ref{pre_err}.

According to the discussion above, the one-step regret satisfies $R_t \le 3 \alpha \sqrt{x^\top_p \A_m^{-1} x_p}$.
\end{proof}

\begin{lemma}
\label{double_x}
Assume $\|x_p\|_2 \le C_{x}$ and the minimum eigenvalue of $\A_0$ satisfies $\lambda_{min}(\A_0) \ge max\{1, C_x^2\}$. Then, we have
\begin{equation}
    \sum_{t=1}^T x^\top_p \A_{t-1}^{-1} x_p \le 2\log(\frac{\det(A_M)}{\det{I_d}}) \le 2d[\log (\beta + \frac{M C_x^2}{d}) - \log \beta] \notag
\end{equation}
\end{lemma}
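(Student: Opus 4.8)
The plan is to prove this with the standard elliptical-potential argument: relate each summand to the one-step growth of $\det(\A_m)$, telescope the resulting ratios, and then make the log-determinant explicit through the trace and the AM--GM inequality. The first move I would make is to observe that only the $M$ sampling frames actually contribute. On a non-sampling frame ($t\in\mathcal{N}$) ANS selects $p=P$, whose context is the zero vector, so $\A_t=\A_{t-1}$ and the summand $x_{p_t}^\top\A_{t-1}^{-1}x_{p_t}=0$. Hence $\sum_{t=1}^T x_{p_t}^\top\A_{t-1}^{-1}x_{p_t}=\sum_{m=1}^M x_{p_{t_m}}^\top\A_{m-1}^{-1}x_{p_{t_m}}$, and I can index everything by the sampling frames $t_1,\dots,t_M$.

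Next, for each sampling update $\A_m=\A_{m-1}+x_{p_{t_m}}x_{p_{t_m}}^\top$ I would apply the matrix determinant lemma for a rank-one perturbation, giving $\det(\A_m)=\det(\A_{m-1})\bigl(1+x_{p_{t_m}}^\top\A_{m-1}^{-1}x_{p_{t_m}}\bigr)$, equivalently $x_{p_{t_m}}^\top\A_{m-1}^{-1}x_{p_{t_m}}=\det(\A_m)/\det(\A_{m-1})-1$. I would then invoke the elementary inequality $z\le 2\log(1+z)$, valid for $z\in[0,1]$. Its hypothesis is met because $\A_{m-1}\succeq\A_0$ implies $\A_{m-1}^{-1}\preceq\A_0^{-1}$, so with $\|x_p\|_2\le C_x$ and $\lambda_{\min}(\A_0)\ge\max\{1,C_x^2\}$ we get $x_p^\top\A_{m-1}^{-1}x_p\le C_x^2/\lambda_{\min}(\A_0)\le 1$. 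Applying the inequality termwise and telescoping the logarithms yields $\sum_{m=1}^M x_{p_{t_m}}^\top\A_{m-1}^{-1}x_{p_{t_m}}\le 2\sum_{m=1}^M\log\frac{\det(\A_m)}{\det(\A_{m-1})}=2\log\frac{\det(\A_M)}{\det(\A_0)}$, which is the middle expression of the claim (recalling the initialization $\A_0=\beta I_d$).

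Finally I would bound $\det(\A_M)$ by its trace. Since $\mathrm{tr}(\A_M)=\mathrm{tr}(\A_0)+\sum_{m=1}^M\|x_{p_{t_m}}\|_2^2\le d\beta+MC_x^2$, the AM--GM inequality applied to the nonnegative eigenvalues of the symmetric positive-definite matrix $\A_M$ gives $\det(\A_M)\le(\mathrm{tr}(\A_M)/d)^d\le(\beta+MC_x^2/d)^d$; together with $\det(\A_0)=\beta^d$ this produces $2\log\frac{\det(\A_M)}{\det(\A_0)}\le 2d\bigl[\log(\beta+MC_x^2/d)-\log\beta\bigr]$, completing the chain.

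The argument is essentially routine and I do not foresee a genuine obstacle. The two points that warrant care are (i) the per-step bound $x_p^\top\A_{m-1}^{-1}x_p\le 1$, which alone legitimizes the use of $z\le 2\log(1+z)$ and is exactly where the eigenvalue assumption $\lambda_{\min}(\A_0)\ge\max\{1,C_x^2\}$ (equivalently $\beta\ge\max\{1,C_x^2\}$) is needed, and (ii) the bookkeeping that collapses the sum over all $T$ frames onto the $M$ sampling frames via the vanishing on-device context. Once these are in place, the determinant telescoping and the trace--AM--GM step go through directly.
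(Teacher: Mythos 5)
Your proof is correct and takes essentially the same route as the paper, which simply defers to Lemma 11 of Abbasi-Yadkori et al.\ (2011) --- the standard elliptical-potential argument you reproduce via the rank-one determinant update, the inequality $z \le 2\log(1+z)$ for $z\in[0,1]$, telescoping, and the trace/AM--GM bound. One small but worthwhile observation: your middle expression $2\log\bigl(\det(\A_M)/\det(\A_0)\bigr)$ with $\det(\A_0)=\beta^d$ is the correct reading of the lemma (the paper's $\det(I_d)$ is a typo), since only that version is consistent with the final $-\log\beta$ term.
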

\begin{proof}
The proof follows Lemma 11 of \cite{abbasi2011improved}.
\end{proof}

With Lemma \ref{oneStepRegret} and Lemma \ref{double_x}, we can get the total regret incurred in the regular sampling sequence $\mathcal{R}$
{\allowdisplaybreaks
\begin{align*}
    & R_{\mathcal{R}} = \sum_{t=1}^T R_{t}\textbf{1}\{t \in \mathcal{R}\} \le \sqrt{M \sum_{m=1}^M R_{t_m}^2 \textbf{1}\{t \in \mathcal{R}\}} \notag\\ 
    & \le \sqrt{4M\alpha^2 \sum_{m=1}^M x^\top_{m, p} \A_m^{-1} x_{m, p}} \le 2\alpha \sqrt{2Md[\log (\beta + \frac{MC_x^2}{d}) - \log \beta]} \notag\\
    & \le 2\alpha \sqrt{2Td[\log (\beta + \frac{TC_x^2}{d}) - \log \beta]} = 2G(T)
\end{align*}
}
where the first inequality holds by the Jensen's inequality; the second inequality holds by Lemma \ref{oneStepRegret} and relaxing the indicator function $\textbf{1}\{t \in \mathcal{R}\}$; the third inequality holds by Lemma \ref{double_x}; the fourth inequality holds by the fact $M \le T$.

Next, we consider the total regret incurred in the non-sampling sequence $\mathcal{N}$.
{\allowdisplaybreaks
\begin{align*}
    & R_{\mathcal{N}} = \sum_{t=1}^T R_{t}\textbf{1}\{t \in \mathcal{N}\} \le T^\mu \sum_{m=1}^M R_{m}  \le T^\mu 3 \alpha \sqrt{x^\top_p \A_t^{-1} x_p}  = 3T^\mu \cdot G(T)
\end{align*}
}

Next, we consider the total regret incurred in the forced sampling sequence $\mathcal{F}$.
\begin{equation}
    R_{\mathcal{F}} = \sum_{t=1}^T R_{t}\textbf{1}\{t \in \mathcal{F}\} \le T^{1-\mu} \bigtriangleup_{max}
\end{equation}
where $\bigtriangleup_{max}$ is the maximum latency gap between local processing and other ANS's actions. Thus, combining these regret bounds, we obtain
{\allowdisplaybreaks
\begin{align*}
    & R_{total} = R_{\mathcal{R}} + R_{\mathcal{N}} + R_{\mathcal{F}} \le (2 + 3T^\mu)G(T) + T^{1-\mu} \bigtriangleup_{max} \notag \\
    & G(T) = \frac{C_\theta + C_\eta \sqrt{d\log\frac{1 + T C_x^2}{\delta}}}{1 - L_{key}} \cdot \sqrt{2Td[\log (\beta + \frac{TC_x^2}{d}) - \log \beta]} \notag \\
\end{align*}
}
where $G(T) = O(T^{0.5} \log(T/\delta))$.
Thus, Theorem 1 shows that the regret bound of ANS is sublinear in $T$, or $\max\{O(T^{0.5 + \mu}\log (T/\delta)), O(T^{1-\mu})\}$ by choosing $\mu \in (0, 0.5)$.

\end{document}